\documentclass[11pt,a4paper]{article}
\usepackage[T1]{fontenc}
\usepackage{lmodern}
\usepackage[utf8]{inputenc}
\usepackage[margin=25mm]{geometry}
\usepackage{amsmath,amssymb,amsthm,mathrsfs}
\usepackage{graphicx,booktabs,array}
\usepackage[numbers,sort&compress]{natbib}
\usepackage{algorithm,algorithmic}
\usepackage{xcolor}
\usepackage{xurl}
\usepackage[hidelinks]{hyperref}
\hypersetup{
  pdftitle={CovFedAvg: Local Low-Loss Curves and Federated Model Selection},
  pdfauthor={Federico Lucchetti and Marcus Völp},
  pdfsubject={Federated learning under feature heterogeneity},
  pdfkeywords={federated learning, non-IID data, model aggregation, mode connectivity}
}
\usepackage{microtype}
\usepackage{placeins}
\newtheorem{theorem}{Theorem}[section]
\newtheorem{proposition}[theorem]{Proposition}

\newtheorem{corollary}[theorem]{Corollary}
\theoremstyle{definition}

\newtheorem{assumption}[theorem]{Assumption}
\newtheorem{remark}[theorem]{Remark}
\newcommand{\R}{\mathbb{R}}
\newcommand{\E}{\mathbb{E}}
\newcommand{\covfed}{\textsc{CovFedAvg}}
\newcommand{\covfedcal}{\covfed{} (calibration)}
\newcommand{\covfedadapt}{\covfed{} (calibration + adapters)}
\DeclareMathOperator*{\argmin}{arg\,min}
\graphicspath{{./}}
\title{CovFedAvg: Local Low-Loss Curves\\
and Federated Model Selection}
\author{
Federico Lucchetti\textsuperscript{1}\quad Marcus Völp\textsuperscript{2}\\[0.5em]
\small\textsuperscript{1}Center for Psychiatric Neuroscience, Department of Psychiatry,\\
\small Lausanne University Hospital and the University of Lausanne,\\
\small Lausanne, Switzerland\\[0.3em]
\small\textsuperscript{2}Interdisciplinary Centre for Security, Reliability and Trust (SnT),\\
\small University of Luxembourg, Luxembourg
}
\date{29 October 2026}
\begin{document}
\maketitle
\begin{abstract}
Clients observing different parts of a prediction task may learn locally
equivalent models with incompatible behavior outside their observed support.
We study whether redundant models permit local low-loss paths toward a
compatible shared model. CovFedAvg retains local models and curve controls
on clients, fits a common endpoint through aggregated gradient contributions,
and selects an actual candidate model using distributed training losses.
No centralized calibration examples or client-to-client sample exchange are
required. Exact selection guarantees no higher evaluated training loss than
the previous model and the current-round FedAvg candidate; this is not a
global-optimality, stationarity, generalization, or differential-privacy
guarantee. We state conditional client-compatibility and approximate-selection
bounds and retain corrected transformation identities and a separate
full-gradient safeguard theorem for reference algorithms. A complementary
input-support example recovers the shared absolute-value regression, while
matched direct fine-tuning also solves it. Across a synthetic support task,
recorded activity-recognition subjects, and participating and future
gas-sensor batches, we evaluate five methods, five seeds, two network widths,
and 20 rounds without calibration data. At width 16, CovFedAvg raises
participating-Gas macro balanced accuracy from FedAvg's $88.88\%$ to $89.84\%$;
matched distributed fine-tuning reaches $91.65\%$. Results support feasibility
and modest gains over FedAvg, but do not establish a curve-specific advantage
over equally costly optimization. Exact-sum numerical experiments establish
data flow; end-to-end privacy requires additional mechanisms and analysis.
\end{abstract}

\section{Introduction}
Federated learning optimizes a shared predictor from data held by multiple
clients. FedAvg alternates local training with averaging of client parameters
\citep{mcmahan2017fedavg}. Its behavior depends on the number of local steps,
the learning rate, the model, and differences between client objectives.
Feature heterogeneity is one source of these differences: scanners, cameras,
measurement units, or acquisition conditions may change the input
distribution without changing the semantic meaning of a label. Such
heterogeneity is distinct from a change in class proportions and from an
arbitrary change in the labeling rule.

The motivating idea of covariant federated learning is to describe client
domains as transformations of a reference distribution and to account for
those transformations when combining models. The useful part of this
analogy is precise: transforming both the inputs and a compatible predictor
can preserve prediction risk. However, this does not imply that an unchanged
predictor has invariant risk, that every pair of client distributions admits
an invertible transformation, or that a low-loss curve connecting trained
models must exist. The present revision states these assumptions explicitly.

The mechanism of interest is more general than coordinate changes. When
clients observe complementary parts of the same input--output relationship,
some model directions can be unidentifiable locally. A redundant model may
incorporate another client's information through those directions while
preserving its original predictions. This motivates seeking compatible
representatives of local low-loss regions. A common labeling vocabulary or
model redundancy alone does not establish that those regions intersect.
Pure parameter symmetries can also preserve the same function everywhere
without adding any knowledge about an unobserved domain.

Low-loss paths motivate exploring parameter configurations
\citep{garipov2018mode}. The present method, CovFedAvg, keeps each client's
local model and path control on that client and fits a common endpoint using
local supervised losses and aggregate messages. It selects the model itself
rather than an arithmetic centroid of curve weights. This construction needs
neither server-held labeled calibration data nor sample sharing between
clients. Its simulation uses an aggregate-only interface; keeping examples
local is not a formal privacy guarantee. A free shared endpoint also means
that fitting is distributed optimization with geometric penalties, whose
extra gradient costs must be compared against direct optimization.

The main contribution is an executable local-curve fitting and model-selection
protocol, with conditional compatibility and exact or bounded-error selection
statements, evaluated against matched distributed fine-tuning and centroid
controls. We retain three reference components from the earlier revision:
correct joint pushforwards with explicit predictor compatibility; normalized
public-calibration centroid proposals and their limitations; and a separate
full-client gradient safeguard with a stationarity theorem. The latter
theorem does not automatically apply to CovFedAvg's selector. Earlier
calibration-based experiments remain labeled reference evidence rather than
results for the new method.

Statistical
heterogeneity alone does not imply that FedAvg must perform poorly: its
one-full-gradient-step average equals the centralized gradient step for any
client distributions. Additional local steps, gradient disagreement, model
capacity, and the evaluation domain determine how much heterogeneity matters.
The evaluation preserves favorable FedAvg results as well as failures.
Closely related
geometric aggregation already appears in FedGMCC
\citep{lucchetti2022fedgmcc}, and intersections of client mode connections
are studied by \citet{thakur2025clinical}. We do not claim invention of
geometric model aggregation or universal recovery under feature heterogeneity.

\section{Related work}
\label{sec:related}
FedAvg alternates local optimization and sample-weighted parameter
averaging~\citep{mcmahan2017fedavg}. FedProx adds a proximal penalty to the
local objective to control drift under statistical and systems
heterogeneity~\citep{li2020fedprox}. Both are relevant baselines; neither
method implies that arbitrary feature transformations preserve the same
optimal parameter vector. SCAFFOLD uses persistent client and server control
variates to correct client drift~\citep{karimireddy2020scaffold}. We include
a full-participation, sample-weighted Option II implementation and disclose
its unequal client minibatch-step counts; the published rates are not
silently applied to that extension.

Feature heterogeneity has established dedicated methods. FedBN retains
client-specific batch-normalization parameters and statistics while
aggregating the other model parameters~\citep{li2021fedbn}. This is a
change to parameter sharing throughout training, rather than a single
normalization of the input. FRAug trains a shared representation
generator and client-specific representation transformation networks to
augment local embeddings~\citep{chen2023fraug}. The reference centroid
construction uses public labeled calibration examples; this additional
resource must be disclosed in comparisons. The current CovFedAvg method
instead fits local curves through aggregate messages without central
calibration data, as evaluated in Section~\ref{sec:local-curve-experiments}.
The reference natural-domain evaluation
compares FedAvg, FedProx, SCAFFOLD, a centralized reference, fixed-adapter
ablations, and safeguards with no calibration, direct calibration, and curve
fitting. The two calibration proposal methods receive the same labeled
calibration examples and gradient-evaluation budget. The full-client risk
and gradient queries needed by the safeguards are separately reported.
Numerical comparisons with FedBN and FRAug remain open; fixed feature
standardization is not presented as either method.

The practical path construction is motivated by mode
connectivity~\citep{garipov2018mode}, which finds simple low-loss paths
between particular trained networks. Stochastic weight averaging provides
an empirical precedent for averaging nearby trained
weights~\citep{izmailov2018swa}. These observations do not guarantee that
the arithmetic centroid of an arbitrary low-loss curve also has low
loss. Parameter coordinates also matter: Git Re-Basin studies permutation
alignment before merging networks~\citep{ainsworth2023gitrebasin}. A
connection between model weights therefore cannot, by itself, identify a
feature-space transformation or quantify distributional heterogeneity.

Geometric aggregation has already been studied in federated learning.
FedGMCC combines Monte Carlo output probing, path connectivity, and
aggregation of connected client models within
clusters~\citep{lucchetti2022fedgmcc}; it is a particularly close existing
framework. Later studies analyze mode connectivity under
heterogeneity~\citep{zhou2023mode} and averaging across global
checkpoints~\citep{zhou2024averaging}. FedGuCci uses anchor models to
improve group connectivity during client
training~\citep{li2025fedgucci}. We consequently make no claim to introduce
geometric federated learning or path aggregation. The scope here is a
corrected formulation of the covariance assumptions, a local-curve and
shared-endpoint protocol without server calibration data, conditional risk
and selection bounds, and reproducible evaluations with matched distributed
optimization controls. Earlier normalized centroid rules and their
calibration-based evaluations are retained as reference constructions.
The literature selection
was checked against primary sources on 28 October 2026; it is not an
exhaustive survey or a claim of state-of-the-art performance.

\section{Problem and transformation model}\label{sec:problem}
\subsection{Federated objective}
Let client $k\in\{1,\ldots,K\}$ hold a training set $D_k$ of size $n_k$,
drawn from a joint distribution $P_k$ on $\mathcal X\times\mathcal Y$.
Let $\alpha_k=n_k/\sum_j n_j$, unless a different target population is
explicitly chosen. A common parameter vector $w\in\R^p$ specifies a
predictor $f_w$. The population and empirical objectives are
\begin{equation}\label{eq:objective}
 F_k(w)=\E_{(X,Y)\sim P_k}\ell(f_w(X),Y),\qquad
 F(w)=\sum_{k=1}^K\alpha_k F_k(w),\qquad
 \widehat F_k(w)=\frac{1}{n_k}\sum_{(x,y)\in D_k}\ell(f_w(x),y).
\end{equation}
For classification, $f_w(x)$ is a probability vector and
$\ell(f_w(x),y)=-\log f_w(x)_y$. The global distribution is the
\emph{normalized mixture} $P=\sum_k\alpha_k P_k$.
Minimizing $F$ is a population tradeoff: a global optimizer need not achieve
the minimum of every $F_k$. Even two quadratic client objectives with
different minimizers generally cannot be minimized simultaneously.

\subsection{Label-preserving domains are an assumption}
Suppose a reference pair $(Z,Y)$ has distribution $P_0$, and client $k$
observes $(T_k(Z),Y)$. In measure notation,
\begin{equation}\label{eq:pushforward-model}
 P_k=(T_k,\mathrm{id}_{\mathcal Y})_\# P_0.
\end{equation}
The pushforward is defined for any measurable $T_k$, including noninvertible
maps. To use an inverse density formula we additionally require that $T_k$
be a $C^1$ diffeomorphism between open input domains with nonsingular
Jacobian. In this case, for each discrete label $y$,
\begin{equation}\label{eq:joint-density}
 p_k(x,y)=p_0(T_k^{-1}(x),y)\left|\det D T_k^{-1}(x)\right|.
\end{equation}
This is a formula for the joint input vector. A product of individual pixel
Jacobians is valid only for an appropriately separable transformation and
does not describe arbitrary coupled image transformations.

Invertibility is a modeling restriction, not a consequence of writing a
symbol $T$ or $\Gamma$. Positive coordinate scalings, translations, and
nonsingular affine maps satisfy it. Cropping, clipping, quantization, and
many style transformations do not. A continuous interpolation between two
invertible maps can also be singular: interpolation between $I$ and $-I$
passes through the zero map. The affine experimental transformations have
strictly positive scales and use no clipping, so they satisfy the restriction
on the continuous ambient input space. The observed benchmark datasets are
finite empirical measures; their transformation is understood as a
pushforward, without assuming that the empirical measures have densities.

Figure~\ref{fig:original-transforms} retains the original visual motivation:
client observations may differ by uniform intensity changes, spatially
varying transformations, or more complex deformations. Its $\Gamma$ notation
corresponds to the input map $T$ used here. For the label-preserving model,
such changes must retain the semantic class, for example different writing
styles of the same digit. A visual interpolation alone does not establish
that requirement or an invertible transformation between distributions.

\begin{figure}[tbp]
\centering
\includegraphics[width=.62\linewidth]{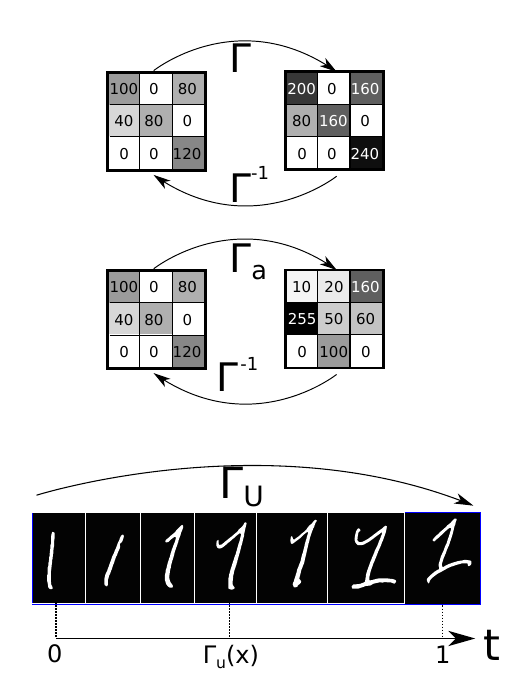}
\caption{Original schematic motivating transformed client domains. Top:
uniform intensity scaling; middle: spatially varying pixel changes;
bottom: a parameterized visual deformation. The inverse arrows require an
invertible map, which the finite examples do not establish. The digit
interpolation may change class; it illustrates deformation rather than the
label-preserving assumption of the formal model.}
\label{fig:original-transforms}
\end{figure}

Equation~\eqref{eq:pushforward-model} preserves $P_k(Y)=P_0(Y)$, but it does
not generally preserve the labeling conditional in the observed coordinates:
\begin{equation}
 P_k(Y\mid X=x)=P_0(Y\mid Z=T_k^{-1}(x)).
\end{equation}
Consequently, label-preserving feature transformations are not synonymous
with classical covariate shift, which requires a common $P(Y\mid X)$.
Nor is a change in $P(X\mid Y)$ alone a definition of concept shift. We use
\emph{transformed domains} for the present model and make no assertion that
it covers every form of statistical heterogeneity.

\subsection{What risk covariance does and does not mean}
If the model class admits a parameter action $\rho(T)$ satisfying
\begin{equation}\label{eq:model-action}
 f_{\rho(T)w}(Tz)=f_w(z),
\end{equation}
then transforming the data and predictor together preserves risk, as proved
in Section~\ref{sec:guarantees}. For example, if a network starts with an
unconstrained affine layer $Wz+b$ and $Tz=Az+c$ with $A$ nonsingular,
the transformed first layer can be chosen as
$W'=WA^{-1}$ and $b'=b-WA^{-1}c$, with later layers unchanged.
This example establishes an exact compatibility property for that model
class; it is not a claim that the server can identify $A$ from model weights.

A common transformation applied to identical client distributions leaves
those distributions identical. It does not imply invariant loss at fixed
weights. Conversely, different transformations can preserve the same
distribution, so a client-specific transformation need not create
heterogeneity. For example, rotations preserve an isotropic Gaussian.
The mean of per-coordinate distribution distances is insufficient to
identify general joint shifts: $(Z,Z)$ and $(Z,-Z)$ have identical
one-dimensional marginals for symmetric $Z$, but different dependence.
We do not infer a distance between client distributions from a curve in
parameter space.

The action~\eqref{eq:model-action}, when available, supplies client-specific
predictors with equal reference risk. A single predictor acting on raw
inputs from an unknown mixture of domains may not attain all those risks.
Compatible transformations, sufficient model capacity, and informative
observations remain substantive assumptions.

\FloatBarrier
\section{Reference construction: public-data curve centroids}\label{sec:algorithm}
We retain the curve proposal and the empirical guard used in the
controlled pilot. Section~\ref{sec:safeguarded} strengthens the acceptance
rule using actual client training objectives and fixed input adapters; that
variant is used in the reference natural-domain evaluation.
Section~\ref{sec:local-curves} introduces the current local-curve and model
selection construction, which eliminates the server calibration requirement
and is evaluated separately. These are different algorithms and access models.

\subsection{Geometric rationale}
Figure~\ref{fig:original-geometry} retains the original motivation for
exploring curves: distinct client solutions may occupy a curved low-loss
region, while their straight parameter average can cross a higher-loss
region. A learned path explores additional parameter configurations and
motivates a candidate aggregation rule. The picture is an idealized
possibility, not evidence that a suitable path exists for every client pair.
Input transformations and weight-space paths are separate objects; the
former do not establish the latter. The implemented construction below uses
paths from the FedAvg anchor to individual client models, followed by an
explicit evaluation of their normalized centroid. The counterexample in
Figure~\ref{fig:counterexample} explains why a low-loss path alone is
insufficient to certify that centroid.

\begin{figure}[tbp]
\centering
\includegraphics[width=.62\linewidth]{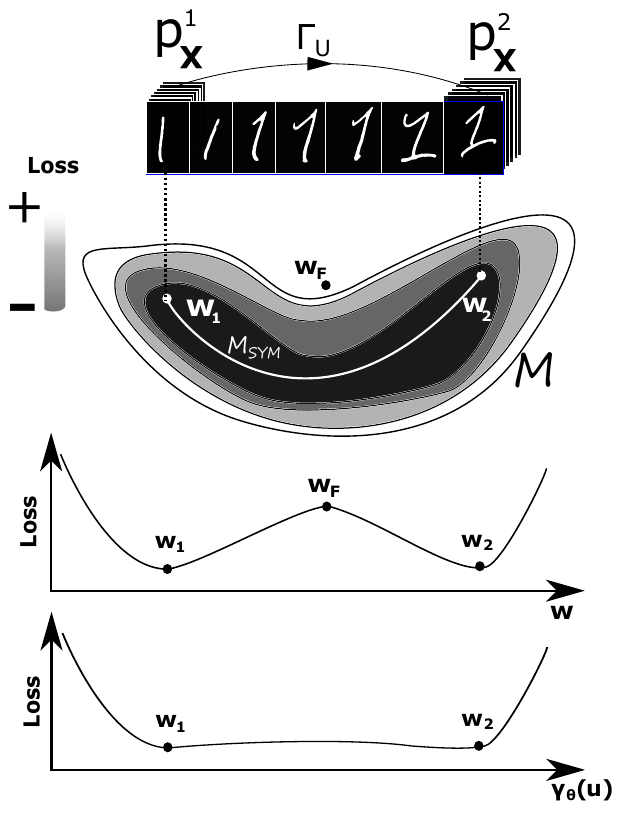}
\caption{Original geometric rationale for curve aggregation (idealized
schematic). Models $w_1,w_2$ can lie in a curved low-loss region even when
their Euclidean average $w_F$ has higher loss; the bottom profile motivates
seeking a low-loss path $\gamma_\theta(u)$. The original labels $\mathcal M$
and $\mathcal M_{\mathrm{SYM}}$ illustrate this picture and do not assert
an inferred manifold or connection. Neither low centroid risk nor a mapping
from input deformations to parameter paths follows from the sketch. The
digit strip is a deformation analogy and may change class.}
\label{fig:original-geometry}
\end{figure}

\subsection{Access model}
The original suggestion that random input probes reveal client training
losses does not provide an identifiable optimization problem: agreement
away from the data support need not imply agreement on the client task,
and an unlabeled probe does not measure supervised risk. Here we choose
an explicit, stronger access model. The server has a labeled public
calibration set $D_{\mathrm{cal}}$ representative of the target mixture,
and a disjoint public validation set $D_{\mathrm{val}}$. Both are separated
from local training and test data. Define their empirical risks
$F_{\mathrm{cal}}$ and $F_{\mathrm{val}}$ in the usual way.
These data are additional information, and their construction and sizes
are part of the experimental protocol. They are not recovered from client
weights. In a deployment they must come from an independently available,
appropriately collected source.

An alternative implementation could query weighted client losses and
gradients at proposed path points. This would optimize an empirical
federated objective without centralizing raw examples, but it adds
communication and exposes extra statistics. That alternative is not used
in the public-calibration reference experiments. The current construction
in Section~\ref{sec:local-curves} instead fits and selects using local
training losses and aggregate messages. Neither construction establishes differential
privacy or protection against model or gradient inversion.

\subsection{Continuous paths with a normalized centroid}
At communication round $t$, all clients start from $w_t$ and perform $E$
local epochs to return $v_{t,k}$. Write
\begin{equation}\label{eq:anchor}
 a_t=\sum_k\alpha_k v_{t,k}.
\end{equation}
For each client construct a polygonal path from the common anchor $a_t$
to $v_{t,k}$ through a trainable control $\theta_{t,k}$:
\begin{equation}\label{eq:polygon}
 \gamma_{t,k}(u)=
 \begin{cases}
 (1-2u)a_t+2u\theta_{t,k},&0\leq u\leq\tfrac12,\\
 2(1-u)\theta_{t,k}+(2u-1)v_{t,k},&\tfrac12\leq u\leq1.
 \end{cases}
\end{equation}
The two branches both equal $\theta_{t,k}$ at $u=1/2$. Unlike a differential
geometric connection, this is an ordinary continuous parameter-space path.
It can have a corner; smoothness at the control point is not required.
Initialize $\theta^0_{t,k}=(a_t+v_{t,k})/2$. Keeping the endpoints fixed,
approximately minimize
\begin{equation}\label{eq:curve-fitting}
 H_t(\theta)=\sum_k\alpha_k\left[
 \frac{1}{Q}\sum_{q=1}^Q F_{\mathrm{cal}}(\gamma_{t,k}(u_q))
 +\frac{\lambda}{2}\|\theta_{t,k}-\theta^0_{t,k}\|^2\right],
 \qquad (u_1,u_2,u_3)=(\tfrac14,\tfrac12,\tfrac34).
\end{equation}
The finite sample of path points is a numerical surrogate for an integral;
it does not certify the maximum loss along the whole path. The controls
are updated simultaneously by gradient descent on $H_t$, retaining the
client factor $\alpha_k$ in each control gradient.
The chain-rule factor multiplying $\nabla F_{\mathrm{cal}}$ is
$b(u)=2u$ on the first branch and $b(u)=2(1-u)$ on the second.
It is not valid simply to replace a parameter gradient with a control
gradient without this factor.

The exact average of each path and their weighted centroid are
\begin{equation}\label{eq:centroid}
 z_{t,k}=\int_0^1\gamma_{t,k}(u)\,du
 =\frac{a_t+v_{t,k}}{4}+\frac{\theta_{t,k}}{2},
 \qquad c_t=\sum_k\alpha_k z_{t,k}.
\end{equation}
The coefficients in each $z_{t,k}$ sum to one, and the client coefficients
also sum to one. With the midpoint initialization,
$z^0_{t,k}=(a_t+v_{t,k})/2$ and hence
$c^0_t=(a_t+\sum_k\alpha_kv_{t,k})/2=a_t$.
Thus disabling the fitting step recovers weighted FedAvg exactly for any
number of clients, any positive sample sizes, and any model dimension.
The identity also handles $K=1$; no factor $1/(K-1)$ is needed.

\subsection{Empirical guard and computation}
Accept the candidate only when its validation loss is no larger than the
anchor's loss:
\begin{equation}\label{eq:guard}
 w_{t+1}=
 \begin{cases}
 c_t,&F_{\mathrm{val}}(c_t)\leq F_{\mathrm{val}}(a_t),\\
 a_t,&\text{otherwise}.
 \end{cases}
\end{equation}
This guarantees an empirical comparison with this round's anchor. It does
not imply monotone validation loss across rounds, a population improvement,
or that the resulting trajectory dominates a separately trained FedAvg
trajectory. Reusing the validation set also introduces adaptive selection;
test data are never used in the guard.

\begin{algorithm}[t]
\caption{Revised \covfed\ with public calibration}
\label{alg:covfed}
\begin{algorithmic}[1]
\REQUIRE Initial $w_0$, weights $\alpha_k$, rounds $T$, fitting steps $J$,
calibration step size $\beta$, disjoint $D_{\mathrm{cal}},D_{\mathrm{val}}$.
\FOR{$t=0,\ldots,T-1$}
 \STATE All clients train from $w_t$; collect $v_{t,k}$.
 \STATE Compute $a_t=\sum_k\alpha_k v_{t,k}$.
 \STATE Initialize $\theta^0_{t,k}=(a_t+v_{t,k})/2$ for every $k$.
 \FOR{$j=0,\ldots,J-1$}
  \FOR{$k=1,\ldots,K$}
   \STATE Compute $\nabla_{\theta_k}H_t$ at $\theta^j$, including $\alpha_k$.
   \STATE Update $\theta^{j+1}_{t,k}$ using step size $\beta$.
  \ENDFOR
 \ENDFOR
 \STATE Compute $c_t$ by Equation~\eqref{eq:centroid}.
 \STATE Select $w_{t+1}$ by Equation~\eqref{eq:guard}.
\ENDFOR
\RETURN $w_T$.
\end{algorithmic}
\end{algorithm}

Each round requires $JKQ$ calibration forward/backward evaluations and two
validation forward evaluations in addition to ordinary local training.
For $K=4$, $J=3$, and $Q=3$, this is 36 calibration gradient evaluations.
Storage for the client models and controls scales as $O(Kp)$; sequential
control processing can reduce temporary path storage. In the public-data
implementation these operations do not add client communication, but they
add server computation and public data requirements. These costs are
different from fitting each path through federated gradient queries.

The matched calibration baseline performs $JKQ$ gradient updates to the
FedAvg anchor on the same public calibration set and uses the same
validation guard. A comparison with FedAvg alone would confound the effect
of the path representation with the effect of extra labeled data and
computation. Matching gradient-call counts controls that budget while
leaving the distinct update geometries and step-size sensitivity visible.

\FloatBarrier
\section{Mathematical guarantees and limits}
\label{sec:theory}
\label{sec:guarantees}

The results below separate three questions: when a feature transformation
preserves a prediction problem, how far local training can move from a
centralized trajectory, and when an aggregated update descends the global
risk. None of these properties implies the others without additional
assumptions. Throughout, $\alpha_k\geq 0$ and $\sum_{k=1}^K\alpha_k=1$.
We use the population risks
\[
 F_k(w)=\mathbb E_{(X,Y)\sim P_k}\ell(f_w(X),Y),
 \qquad F(w)=\sum_{k=1}^K\alpha_k F_k(w).
\]
The same statements apply to empirical risks when their assumptions hold.

\subsection{Feature transformations require a compatible model action}
\label{subsec:pushforward}

Let $\mathcal X,\mathcal X'$ be open subsets of $\mathbb R^d$, let
$\mathcal Y$ be a finite label set, and let
$T:\mathcal X\longrightarrow\mathcal X'$ be a $C^1$ diffeomorphism.
The label-preserving pushforward of a joint distribution $P$ is
\[
 P_T=(T,\operatorname{id}_{\mathcal Y})_\#P,
 \qquad (X_T,Y_T)=(T(X),Y).
\]
This defines a distribution regardless of whether a density exists.

\begin{proposition}[Pushforward and risk identity]
\label{prop:pushforward}
Suppose $P$ has a joint density $p(x,y)$ with respect to Lebesgue measure
in $x$ and counting measure in $y$. Then
\begin{equation}
\label{eq:joint-pushforward}
 p_T(x',y)=p(T^{-1}x',y)\left|\det D T^{-1}(x')\right|.
\end{equation}
In particular, $P_T(Y=y)=P(Y=y)$. Suppose, in addition, that the model
family admits a parameter transformation $\rho(T)$ satisfying the
explicit compatibility assumption
\begin{equation}
\label{eq:model-action-assumption}
 f_{\rho(T)w}(T x)=f_w(x)
 \quad\text{for $P$-almost every $(x,y)$ and every admissible $w$}.
\end{equation}
For any measurable loss with finite relevant expectations,
\begin{equation}
\label{eq:risk-covariance}
 F_{P_T}(\rho(T)w)=F_P(w),
 \qquad F_Q(v):=\mathbb E_Q\ell(f_v(X),Y).
\end{equation}
If $\rho(T)$ is a bijection between the two parameter spaces, it also
maps the complete set of minimizers of $F_P$ to that of $F_{P_T}$.
\end{proposition}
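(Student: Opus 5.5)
The plan has three parts, handled in order: the density formula, the risk identity, and the statement about minimizers.

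\emph{Density formula.} Work one label at a time, using the change-of-variables theorem for the $C^1$ diffeomorphism $T$. Fix $y\in\mathcal Y$ and a Borel set $B\subseteq\mathcal X'$. By definition of the pushforward,
\[
P_T(B\times\{y\})=P(T^{-1}(B)\times\{y\})=\int_{T^{-1}(B)}p(x,y)\,dx .
\]
Substitute $x=T^{-1}(x')$. This is legitimate because $T^{-1}$ is a $C^1$ diffeomorphism from $\mathcal X'$ onto $\mathcal X$ with nonsingular Jacobian. The integral becomes
\[
\int_B p(T^{-1}x',y)\,\bigl|\det DT^{-1}(x')\bigr|\,dx' .
\]
Sets of the form $B\times\{y\}$ generate the product $\sigma$-algebra on the countable union $\mathcal X'\times\mathcal Y$. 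Hence this integrand is a density of $P_T$ with respect to Lebesgue measure times counting measure, which is \eqref{eq:joint-pushforward}. The label marginal is simpler and does not need the density at all: $P_T(Y=y)=P(\mathcal X\times\{y\})=P(Y=y)$, since $T^{-1}(\mathcal X')=\mathcal X$. (Alternatively, take $B=\mathcal X'$ in the display above.)

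\emph{Risk identity.} Use the abstract change of variables for pushforward measures: for measurable $g\geq 0$ or $g$ integrable, $\mathbb E_{P_T}g(X,Y)=\mathbb E_P g(T(X),Y)$. Apply it with $g(x',y)=\ell(f_{\rho(T)w}(x'),y)$, which gives
\[
F_{P_T}(\rho(T)w)=\mathbb E_P\,\ell\bigl(f_{\rho(T)w}(TX),Y\bigr).
\]
By the compatibility assumption \eqref{eq:model-action-assumption}, the integrand equals $\ell(f_w(X),Y)$ for $P$-almost every $(X,Y)$. Integrals are unaffected by null sets, so the right side equals $F_P(w)$, proving \eqref{eq:risk-covariance}. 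The finite-expectation hypothesis guarantees both sides are well defined; if $\ell$ is nonnegative, the identity holds even in $[0,\infty]$.

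\emph{Minimizers.} Assume $\rho(T)$ is a bijection from the admissible parameters for $P$ onto those for $P_T$. Then every $v$ in the target space can be written $v=\rho(T)w$, so $F_{P_T}(v)=F_P(\rho(T)^{-1}v)$ and $\inf F_{P_T}=\inf F_P$. It follows that $w$ minimizes $F_P$ if and only if $\rho(T)w$ minimizes $F_{P_T}$, so $\rho(T)$ maps $\arg\min F_P$ onto $\arg\min F_{P_T}$. Surjectivity is what rules out extra minimizers of $F_{P_T}$ that have no preimage.

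I expect the main obstacle to be measure-theoretic bookkeeping rather than any computation. Three points need care. First, the substitution must be applied on open sets where $T$ is a bijection onto $\mathcal X'$. Second, measurability of $x'\mapsto\ell(f_{\rho(T)w}(x'),y)$ is needed; I would simply assume it as part of ``finite relevant expectations''. Third, the compatibility condition is used only almost surely, not pointwise, and the proof must respect that. I would state these measurability conditions explicitly at the start rather than derive them.
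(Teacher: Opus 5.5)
Your proposal is correct and follows essentially the same route as the paper's proof. The paper also uses a per-label change of variables for the density and the label marginal, the pushforward expectation identity combined with the almost-everywhere compatibility assumption for the risk identity, and the bijection to transfer minimizers in both directions; your extra measure-theoretic care (generating sets, null sets, measurability of the composed loss) only makes explicit what the paper's shorter proof leaves implicit.
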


\begin{proof}
For each label $y$, apply the multivariate change-of-variables formula
to the joint measure on $\mathcal X\times\{y\}$. This gives
\eqref{eq:joint-pushforward}; integrating it over $x'$ preserves the
mass of each label. By the definition of a pushforward and
\eqref{eq:model-action-assumption},
\[
 F_{P_T}(\rho(T)w)
 =\mathbb E_P\ell(f_{\rho(T)w}(TX),Y)
 =\mathbb E_P\ell(f_w(X),Y)=F_P(w).
\]
If $\rho(T)$ is bijective, every candidate in the transformed parameter
space has a unique preimage. Comparing its risk with that of an arbitrary
transformed minimizer proves the minimizer correspondence in both
directions.
\end{proof}

For example, for $f_w(x)=w^\top x$ and $T(x)=Ax$ with $A$ invertible,
one can take $\rho(T)w=A^{-\top}w$. General nonlinear transformations need
not admit any compatible action within a fixed architecture. If a family
of transformations forms a group, calling $\rho$ a group action further
requires $\rho(S\circ T)=\rho(S)\circ\rho(T)$ and
$\rho(\operatorname{id})=\operatorname{id}$. A continuously parametrized
family alone does not establish these properties.

\begin{remark}[Limits of the transformation interpretation]
\label{rem:transformation-limits}
Equation~\eqref{eq:risk-covariance} compares transformed data with
transformed parameters; it does not assert invariance at fixed $w$.
Where conditional densities are defined,
$p_T(y\mid x')=p(y\mid T^{-1}x')$, which generally differs from
$p(y\mid x')$. Thus label preservation alone does not imply conventional
covariate shift, which requires a common conditional distribution in the
same input coordinates. Applying the same measurable transformation to
identical joint distributions preserves their equality, whether the
transformation is linear or nonlinear. The existence of a data
transformation does not establish a low-loss path between learned
parameters, recover the transformation from weights, or provide a common
optimizer for all clients.
\end{remark}

\subsection{A deterministic bound on local-training drift}
\label{subsec:drift}

We analyze one communication block, with $E$ full-gradient local steps
and a common initialization. Let $\eta>0$ and define
\begin{equation}
\label{eq:local-central-trajectories}
 \begin{split}
 u_{k,0}&=v_0=w,\\
 u_{k,s+1}&=u_{k,s}-\eta\nabla F_k(u_{k,s}),\\
 v_{s+1}&=v_s-\eta\nabla F(v_s),\qquad 0\leq s<E,\\
 a_E&=\sum_{k=1}^K\alpha_k u_{k,E}.
 \end{split}
\end{equation}
Here $v_s$ is a centralized trajectory, rather than a global minimizer.
These updates contain no stochastic gradient noise.

\begin{assumption}[Smoothness and gradient heterogeneity]
\label{ass:drift}
Each $\nabla F_k$ is $L_k$-Lipschitz on a convex region containing the
trajectories in~\eqref{eq:local-central-trajectories}, with $L_k\geq0$.
For constants $\delta_k\geq0$,
\[
 \|\nabla F_k(v_s)-\nabla F(v_s)\|\leq\delta_k,
 \qquad 0\leq s<E.
\]
\end{assumption}

\begin{proposition}[Local and aggregate drift]
\label{prop:local-drift}
Under Assumption~\ref{ass:drift}, write $q_k=1+\eta L_k$ and interpret
empty sums as zero. For every $0\leq s\leq E$,
\begin{equation}
\label{eq:local-drift-bound}
 \|u_{k,s}-v_s\|
 \leq \eta\delta_k\sum_{j=0}^{s-1}q_k^j.
\end{equation}
Consequently,
\begin{equation}
\label{eq:aggregate-drift-bound}
 \|a_E-v_E\|
 \leq \sum_k\alpha_k\eta\delta_k\sum_{j=0}^{E-1}q_k^j.
\end{equation}
The cancellation of the first-step client gradients also gives the
sharper bound
\begin{equation}
\label{eq:aggregate-drift-sharp}
 \|a_E-v_E\|
 \leq \eta^2\sum_k\alpha_k L_k\delta_k
       \sum_{s=0}^{E-1}\sum_{j=0}^{s-1}q_k^j.
\end{equation}
In particular, $a_1=v_1$. For any pair $k,l$,
\[
 \|u_{k,E}-u_{l,E}\|
 \leq\eta\delta_k\sum_{j=0}^{E-1}q_k^j
     +\eta\delta_l\sum_{j=0}^{E-1}q_l^j.
\]
\end{proposition}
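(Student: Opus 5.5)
The plan is to control the error $e_{k,s}:=u_{k,s}-v_s$ by a one-step recursion and then unroll it by induction on $s$. Subtracting the two updates in \eqref{eq:local-central-trajectories} and adding and subtracting $\eta\nabla F_k(v_s)$ gives
\[
 e_{k,s+1}=e_{k,s}-\eta\bigl(\nabla F_k(u_{k,s})-\nabla F_k(v_s)\bigr)-\eta\bigl(\nabla F_k(v_s)-\nabla F(v_s)\bigr).
\]
Both $u_{k,s}$ and $v_s$ lie in the convex region of Assumption~\ref{ass:drift}. So the triangle inequality, the $L_k$-Lipschitz bound and the heterogeneity bound $\delta_k$ yield
\[
 \|e_{k,s+1}\|\leq q_k\|e_{k,s}\|+\eta\delta_k .
\]
Since $e_{k,0}=0$, induction gives $\|e_{k,s}\|\leq\eta\delta_k\sum_{j=0}^{s-1}q_k^j$: the step from $s$ to $s+1$ multiplies the sum by $q_k$ and adds the $j=0$ term. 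This is \eqref{eq:local-drift-bound}.

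For the crude aggregate bound \eqref{eq:aggregate-drift-bound}, I will use $\sum_k\alpha_k=1$ to write $a_E-v_E=\sum_k\alpha_k e_{k,E}$. Then the triangle inequality and \eqref{eq:local-drift-bound} with $s=E$ finish it. The pairwise bound follows the same way from $u_{k,E}-u_{l,E}=e_{k,E}-e_{l,E}$.

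The sharper bound \eqref{eq:aggregate-drift-sharp} is the step needing care. Here I will not bound each client separately but exploit the exact identity $\sum_k\alpha_k\nabla F_k(v_s)=\nabla F(v_s)$. Writing $a_s:=\sum_k\alpha_k u_{k,s}$, this identity makes the heterogeneity terms cancel in the weighted sum, so
\[
 a_{s+1}-v_{s+1}=a_s-v_s-\eta\sum_k\alpha_k\bigl(\nabla F_k(u_{k,s})-\nabla F_k(v_s)\bigr).
\]
Telescoping from $a_0-v_0=0$ gives
\[
 a_E-v_E=-\eta\sum_{s=0}^{E-1}\sum_k\alpha_k\bigl(\nabla F_k(u_{k,s})-\nabla F_k(v_s)\bigr).
\]
Bounding each summand by $L_k\|e_{k,s}\|$ and inserting \eqref{eq:local-drift-bound} produces exactly the double sum in \eqref{eq:aggregate-drift-sharp}. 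The $s=0$ term vanishes because $e_{k,0}=0$, which also matches the empty-sum convention. Taking $E=1$ therefore makes the right-hand side zero, so $a_1=v_1$.

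The only subtlety I expect is bookkeeping: the sharp bound must use the cancellation identity before taking norms, rather than summing the per-client bounds, and the Lipschitz constants must be applied only at points inside the assumed convex region.
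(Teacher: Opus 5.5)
Your proof is correct and follows essentially the same route as the paper. Both arguments use the one-step recursion $\|u_{k,s+1}-v_{s+1}\|\le q_k\|u_{k,s}-v_s\|+\eta\delta_k$ unrolled by induction, then the weighted and pairwise triangle inequalities, and for the sharp bound the cancellation identity $\sum_k\alpha_k\nabla F_k(v_s)=\nabla F(v_s)$. The only cosmetic difference is that you telescope the exact vector identity before taking norms, whereas the paper first derives the scalar recurrence $r_{s+1}\le r_s+\eta\sum_k\alpha_kL_k d_{k,s}$ and then sums it; the two are equivalent.
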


\begin{proof}
Set $d_{k,s}=\|u_{k,s}-v_s\|$. Add and subtract $\nabla F_k(v_s)$
inside the difference of the updates. The triangle inequality and
Assumption~\ref{ass:drift} yield
\[
 d_{k,s+1}\leq (1+\eta L_k)d_{k,s}+\eta\delta_k.
\]
Since $d_{k,0}=0$, induction gives~\eqref{eq:local-drift-bound}.
The weighted triangle inequality then proves
\eqref{eq:aggregate-drift-bound}, and the ordinary triangle inequality
through $v_E$ gives the pairwise bound.

For the sharper result, put $b_s=\sum_k\alpha_k u_{k,s}$ and
$r_s=\|b_s-v_s\|$. The identity
$\sum_k\alpha_k\nabla F_k(v_s)=\nabla F(v_s)$ implies
\[
 b_{s+1}-v_{s+1}
 =b_s-v_s-\eta\sum_k\alpha_k
       \big(\nabla F_k(u_{k,s})-\nabla F_k(v_s)\big).
\]
Thus $r_{s+1}\leq r_s+\eta\sum_k\alpha_k L_k d_{k,s}$.
Sum this recurrence from $s=0$ to $E-1$, use $r_0=0$, and substitute
\eqref{eq:local-drift-bound}. This proves
\eqref{eq:aggregate-drift-sharp}; for $E=1$ its right-hand side is zero.
\end{proof}

When $L_k>0$, the right-hand side of
\eqref{eq:local-drift-bound} equals
$\delta_k((1+\eta L_k)^s-1)/L_k$; for $L_k=0$ it is
$s\eta\delta_k$. These bounds do not establish convergence across
communication rounds. They quantify the discrepancy over one block,
and their assumptions must hold along every block to apply repeatedly.

\begin{corollary}[A sufficient distributional condition]
\label{cor:wasserstein-heterogeneity}
Suppose all clients have the same label probabilities $\pi_y$, the
class-conditional input distributions have finite first moments, and
$g_w(x,y)=\nabla_w\ell(f_w(x),y)$ satisfies
\[
 \|g_w(x,y)-g_w(x',y)\|\leq G\|x-x'\|
\]
uniformly over the relevant $w,x,x',y$. Assume that differentiation and
expectation can be interchanged. Let
$P=\sum_k\alpha_kP_k$. Then a valid choice in
Assumption~\ref{ass:drift} is
\begin{equation}
\label{eq:wasserstein-heterogeneity}
 \delta_k=G\sum_{y\in\mathcal Y}\pi_y
 W_1\big(P_k(X\mid Y=y),P(X\mid Y=y)\big).
\end{equation}
Terms with $\pi_y=0$ are omitted. The Wasserstein distance here is
between distributions of the complete input vector.
\end{corollary}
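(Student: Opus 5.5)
The plan is to bound $\|\nabla F_k(w)-\nabla F(w)\|$ at an arbitrary relevant $w$, in particular at each $v_s$, and then invoke Assumption~\ref{ass:drift}. First I would use the interchange of differentiation and expectation to write $\nabla F_k(w)=\mathbb E_{P_k}g_w(X,Y)$. The global objective is the population mixture, so $\nabla F(w)=\sum_j\alpha_j\nabla F_j(w)=\mathbb E_P g_w(X,Y)$. Every client has label law $\pi$, so the mixture $P$ also has label law $\pi$, and its class conditionals are $P(X\mid Y=y)=\sum_j\alpha_jP_j(X\mid Y=y)$ whenever $\pi_y>0$. Conditioning on the label then gives
\[
 \nabla F_k(w)-\nabla F(w)=\sum_{y:\pi_y>0}\pi_y\Big(\mathbb E_{P_k(X\mid Y=y)}g_w(X,y)-\mathbb E_{P(X\mid Y=y)}g_w(X,y)\Big).
\]
Applying the triangle inequality reduces the problem to one label at a time.

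For a fixed label $y$, I would bound the norm of the difference of two expectations of the vector-valued $G$-Lipschitz map $x\mapsto g_w(x,y)$. Take any coupling $\gamma$ of $\mu=P_k(X\mid Y=y)$ and $\nu=P(X\mid Y=y)$. Then
\[
 \Big\|\int g\,d\mu-\int g\,d\nu\Big\|=\Big\|\int (g(x)-g(x'))\,d\gamma\Big\|\le G\int\|x-x'\|\,d\gamma .
\]
Taking the infimum over couplings gives $G\,W_1(\mu,\nu)$. The finite first moments guarantee that $W_1$ is finite and that an optimal coupling exists. This step works directly for vector-valued maps, since Jensen's inequality for the norm of a Bochner integral applies; I do not need Kantorovich--Rubinstein duality, which is scalar. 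The finiteness of $\mathbb E\|g_w\|$ follows from the Lipschitz bound together with the first moments, provided $g_w(x_0,y)$ is finite at a single point.

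Summing over $y$ with weights $\pi_y$ yields $\|\nabla F_k(w)-\nabla F(w)\|\le\delta_k$ for every relevant $w$, and in particular at each $v_s$. This is exactly Assumption~\ref{ass:drift}'s heterogeneity condition. The main point needing care is the decomposition by label: it relies on equal label probabilities so that the $\pi_y$ factor out identically for $P_k$ and $P$. I would also note that the conditionals are taken on the full input vector, matching the statement's remark. Labels with $\pi_y=0$ carry no mass and are dropped.
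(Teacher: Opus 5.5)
Your proposal is correct and takes the same route as the paper: for each label, couple $P_k(X\mid Y=y)$ with $P(X\mid Y=y)$, bound the difference of expected sample gradients by $G\,\mathbb E\|X_k-X\|$, take the infimum over couplings, and sum with the shared weights $\pi_y$ using the triangle inequality. You also check that the mixture has label law $\pi$ with conditionals $\sum_j\alpha_jP_j(X\mid Y=y)$ and address integrability, which are details the paper's short proof leaves implicit.
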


\begin{proof}
For a fixed label, couple $X_k\sim P_k(X\mid Y=y)$ and
$X\sim P(X\mid Y=y)$. The norm of the difference of their expected
sample gradients is at most
$\mathbb E\|g_w(X_k,y)-g_w(X,y)\|\leq G\mathbb E\|X_k-X\|$.
Take the infimum over couplings, then sum with the shared class weights
$\pi_y$ and apply the triangle inequality.
\end{proof}

Smoothness in $w$ alone does not imply the input Lipschitz condition in
Corollary~\ref{cor:wasserstein-heterogeneity}. Further, averaging
distances between individual feature marginals cannot generally replace
the joint-input distance: different dependence structures can give
identical feature marginals.

\subsection{What integrating a fitted path guarantees}
\label{subsec:centroid}

For a given set of received local parameters $w_1,\ldots,w_K$, let
$a=\sum_k\alpha_k w_k$. We consider the continuous piecewise linear
paths
\begin{equation}
\label{eq:corrected-polyline}
 \gamma_k(u)=
 \begin{cases}
  (1-2u)a+2u\theta_k, & 0\leq u\leq\tfrac12,\\
  2(1-u)\theta_k+(2u-1)w_k, & \tfrac12\leq u\leq1.
 \end{cases}
\end{equation}
Both branches equal $\theta_k$ at the midpoint. Their weighted centroid
is available without numerical integration:
\begin{equation}
\label{eq:path-centroid}
 c=\sum_k\alpha_k\int_0^1\gamma_k(u)\,du
   =\sum_k\alpha_k\left(\frac{a+w_k}{4}+\frac{\theta_k}{2}\right).
\end{equation}
Initializing $\theta_k=(a+w_k)/2$ makes every path a straight segment
and gives exactly $c=a$. This is an algebraic FedAvg recovery property;
it does not require a claim about flatness of the loss.

\begin{proposition}[Risk of a path centroid]
\label{prop:centroid-risk}
Let the paths be measurable and square-integrable. Suppose $F$ is
differentiable with $L$-Lipschitz gradient on a convex region containing
their images and $c$. Define
\[
 J=\sum_k\alpha_k\int_0^1 F(\gamma_k(u))\,du,
 \qquad
 V=\sum_k\alpha_k\int_0^1\|\gamma_k(u)-c\|^2\,du,
\]
and assume the displayed quantities are finite. Then, without requiring
convexity of $F$,
\begin{equation}
\label{eq:centroid-risk-bound}
 F(c)\leq J+\frac L2 V.
\end{equation}
\end{proposition}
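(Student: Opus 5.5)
The plan is to apply the standard descent-lemma lower bound at the centroid $c$ to every path point and then average. Since $\nabla F$ is $L$-Lipschitz on a convex region containing $c$ and all path images, the segment between $c$ and any $\gamma_k(u)$ stays in that region. The usual integral form of Taylor's theorem therefore gives, for every $k$ and $u$,
\[
 F(\gamma_k(u))\geq F(c)+\nabla F(c)^\top\big(\gamma_k(u)-c\big)-\frac L2\|\gamma_k(u)-c\|^2 .
\]
This is the two-sided quadratic bound $|F(y)-F(x)-\nabla F(x)^\top(y-x)|\leq \frac L2\|y-x\|^2$; we only use its lower half. It needs only gradient Lipschitzness, not convexity, which explains the phrase ``without requiring convexity'' in the statement.

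Next, multiply by $\alpha_k$, integrate over $u\in[0,1]$, and sum over $k$. The left-hand side becomes $J$. The constant term becomes $F(c)\sum_k\alpha_k=F(c)$, and the quadratic term becomes $-\frac L2 V$. The linear term is $\nabla F(c)^\top\big(\sum_k\alpha_k\int_0^1\gamma_k(u)\,du-c\big)$, and it vanishes because $c$ is exactly the weighted centroid in \eqref{eq:path-centroid}. Rearranging $J\geq F(c)-\frac L2 V$ gives \eqref{eq:centroid-risk-bound}.

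The only step needing care is justifying the integration. Square-integrability of the paths makes $\int\gamma_k$ well defined, so the linear term is integrable with a finite integral. The quadratic term is integrable by the assumed finiteness of $V$. Measurability of $u\mapsto F(\gamma_k(u))$ follows from continuity of $F$ composed with a measurable path, and $J$ is assumed finite. Thus every term of the pointwise inequality is integrable, and integrating the inequality termwise is legitimate. Exchanging $\nabla F(c)^\top$ with the integral is simply linearity of the Bochner (coordinatewise Lebesgue) integral.

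I expect no serious obstacle. The main point to state explicitly is that the convex-region hypothesis is what allows the Taylor bound along each segment $[c,\gamma_k(u)]$. I would also remark that the bound is tight in the quadratic case: for $F(w)=\frac L2\|w\|^2$, Jensen-type equality shows that the variance penalty cannot be removed.
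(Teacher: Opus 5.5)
Your argument is correct and is essentially the paper's proof: the lower quadratic Taylor bound at $c$ along each segment to $\gamma_k(u)$, then weighted integration, with the linear term cancelling because $c$ is the weighted path centroid; your integrability bookkeeping is a reasonable addition that the paper leaves implicit. Your closing tightness aside has the sign reversed: for $F(w)=\frac L2\|w\|^2$ a direct computation gives $F(c)=J-\frac L2V\leq J$, so the penalty is superfluous there, whereas equality $F(c)=J+\frac L2V$ holds for the concave $F(w)=-\frac L2\|w\|^2$.
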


\begin{proof}
The Lipschitz gradient condition, applied to $-F$ along the segment
from $c$ to any $x$ in the region, gives the lower Taylor bound
\[
 F(x)\geq F(c)+\langle\nabla F(c),x-c\rangle
                    -\frac L2\|x-c\|^2.
\]
Rearrange, set $x=\gamma_k(u)$, and take the weighted integral.
The linear term vanishes because
$\sum_k\alpha_k\int_0^1(\gamma_k(u)-c)\,du=0$.
The remaining terms give~\eqref{eq:centroid-risk-bound}.
\end{proof}

\begin{remark}[Low path risk does not ensure low centroid risk]
\label{rem:centroid-limit}
For $F(w)=(\|w\|^2-1)^2$ on $\mathbb R^2$, every point of the unit
circle has risk zero. The path $\gamma(u)=(\cos(2\pi u),\sin(2\pi u))$
has $J=0$ and centroid $c=0$, where $F(c)=1$. The gradient is Lipschitz
on the closed unit disk, so this example satisfies the local smoothness
assumption of Proposition~\ref{prop:centroid-risk}. The variance term
cannot in general be omitted. In particular, fitting low-risk curves
does not by itself prove that their weight average improves on $a$.
\end{remark}

For a calibration objective $\widehat F$, differentiation under the
integral gives the actual curve-fitting gradient
\begin{equation}
\label{eq:path-fit-gradient}
 \nabla_{\theta_k}\int_0^1\widehat F(\gamma_k(u))\,du
 =\int_0^1\beta(u)\nabla\widehat F(\gamma_k(u))\,du,
 \quad
 \beta(u)=
 \begin{cases}2u,&u\leq\tfrac12,\\2(1-u),&u\geq\tfrac12.
 \end{cases}
\end{equation}
For a finite calibration set and smooth loss this is an ordinary finite
sum followed by a one-dimensional integral; otherwise suitable
domination assumptions are required. If $\widehat F$ is a public labeled
calibration risk, it supplies information beyond the received client
weights. Alternatively, estimating $\nabla F$ through client risk
gradients requires further client queries and communication. Neither
access model is a weights-only guarantee. Agreement between reference
predictions on randomly generated inputs need not estimate either risk.

The term ``path'' has its usual meaning here. No affine connection is
inferred from a fitted curve. For a scalar risk, its derivative along a
smooth path is
$dF(\gamma(u))/du=\langle\nabla F(\gamma(u)),\dot\gamma(u)\rangle$.
This scalar derivative differs from the derivative of the gradient
field; in Euclidean coordinates the latter is
$\nabla^2 F(\gamma(u))\dot\gamma(u)$ when the Hessian exists. No
Euler--Lagrange or covariant-derivative identification is needed for
\eqref{eq:path-fit-gradient}.

\subsection{A conditional descent guarantee}
\label{subsec:conditional-convergence}

\begin{theorem}[Descent under relative gradient error]
\label{thm:inexact-descent}
Suppose $F:\mathbb R^p\to\mathbb R$ has $L$-Lipschitz gradient, with
$L>0$, and an update has the form
\begin{equation}
\label{eq:inexact-update}
 w_{t+1}=w_t-\eta\big(\nabla F(w_t)+e_t\big),
 \qquad \|e_t\|\leq\rho\|\nabla F(w_t)\|,
 \quad 0\leq\rho<1.
\end{equation}
For a constant step size
\begin{equation}
\label{eq:inexact-step-size}
 0<\eta\leq\frac{1-\rho}{L(1+\rho)^2},
\end{equation}
every update satisfies
\begin{equation}
\label{eq:inexact-descent}
 F(w_{t+1})\leq F(w_t)
       -\frac{\eta(1-\rho)}2\|\nabla F(w_t)\|^2.
\end{equation}
If $F\geq F_{\inf}>-\infty$, then
\begin{equation}
\label{eq:stationarity-rate}
 \min_{0\leq t<T}\|\nabla F(w_t)\|^2
 \leq \frac{2\big(F(w_0)-F_{\inf}\big)}
              {\eta(1-\rho)T},
\end{equation}
and $\|\nabla F(w_t)\|\to0$. This conclusion does not assert convergence
of the parameter sequence to a particular minimizer.
\end{theorem}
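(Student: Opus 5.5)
The plan is the standard descent-lemma argument, keeping explicit track of the relative error. Write $g_t=\nabla F(w_t)$ and $d_t=g_t+e_t$, so that $w_{t+1}=w_t-\eta d_t$.

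First, I apply the upper quadratic bound implied by the $L$-Lipschitz gradient (the same Taylor-type inequality used in the proof of Proposition~\ref{prop:centroid-risk}, now for $F$ rather than $-F$):
\[
 F(w_{t+1})\leq F(w_t)-\eta\langle g_t,d_t\rangle+\frac{L\eta^2}{2}\|d_t\|^2 .
\]
Second, I bound the two error-dependent terms using only $\|e_t\|\leq\rho\|g_t\|$ and Cauchy--Schwarz:
\[
 \langle g_t,d_t\rangle=\|g_t\|^2+\langle g_t,e_t\rangle\geq(1-\rho)\|g_t\|^2,
 \qquad
 \|d_t\|\leq(1+\rho)\|g_t\| .
\]
Substituting these gives
\[
 F(w_{t+1})\leq F(w_t)-\eta\|g_t\|^2\Big[(1-\rho)-\frac{L\eta(1+\rho)^2}{2}\Big].
\]
Third, the step-size condition~\eqref{eq:inexact-step-size} gives $L\eta(1+\rho)^2\leq 1-\rho$. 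The bracket is therefore at least $(1-\rho)/2$, which is exactly~\eqref{eq:inexact-descent}. This is the only place where the specific constant in~\eqref{eq:inexact-step-size} matters. The point needing care is the sign of the cross term: I must use the lower bound on $\langle g_t,d_t\rangle$, not a two-sided estimate.

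Fourth, for the rate I sum~\eqref{eq:inexact-descent} over $t=0,\ldots,T-1$. The sum telescopes to $\frac{\eta(1-\rho)}{2}\sum_{t<T}\|g_t\|^2\leq F(w_0)-F(w_T)\leq F(w_0)-F_{\inf}$. Bounding the minimum by the average then yields~\eqref{eq:stationarity-rate}. Letting $T\to\infty$ shows that $\sum_t\|g_t\|^2<\infty$, since $\eta(1-\rho)>0$ because $\rho<1$, and hence $\|g_t\|\to0$. Note that this gives convergence of the whole gradient sequence, not only of its minimum.

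Finally, I will add a remark for the closing sentence of the statement. Nothing in the argument controls $\|w_{t+1}-w_t\|$ summably or identifies a limit point. Hence no convergence of $(w_t)$ to a minimizer is claimed, and none follows from these hypotheses, since $F$ may be nonconvex or have no minimizer at all.
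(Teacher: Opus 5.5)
Your proposal is correct and matches the paper's proof step for step. Both arguments use the descent lemma, the bounds $\langle g_t,g_t+e_t\rangle\geq(1-\rho)\|g_t\|^2$ and $\|g_t+e_t\|\leq(1+\rho)\|g_t\|$, the step-size condition to make the coefficient at least $(1-\rho)/2$, and then telescoping to get both the rate and the summability of $\|\nabla F(w_t)\|^2$, which forces the gradients to zero.
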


\begin{proof}
Write $g=\nabla F(w_t)$ and $e=e_t$. Smoothness gives
\[
 F(w_{t+1})\leq F(w_t)-\eta\langle g,g+e\rangle
                       +\frac{L\eta^2}{2}\|g+e\|^2.
\]
The relative-error assumption implies
$\langle g,g+e\rangle\geq(1-\rho)\|g\|^2$ and
$\|g+e\|\leq(1+\rho)\|g\|$. Hence
\[
 F(w_{t+1})\leq F(w_t)
 -\eta\left((1-\rho)-\frac{L\eta}{2}(1+\rho)^2\right)\|g\|^2.
\]
The step-size bound makes the coefficient in parentheses at least
$(1-\rho)/2$, proving~\eqref{eq:inexact-descent}. Sum this inequality
over $t=0,\ldots,T-1$ and use the lower bound on $F$ to obtain
\eqref{eq:stationarity-rate}. The same telescoping argument bounds the
infinite sum of the nonnegative terms $\|\nabla F(w_t)\|^2$, so those
terms tend to zero.
\end{proof}

For an arbitrary candidate aggregation $c_t$, one can define
$e_t=(w_t-c_t)/\eta-\nabla F(w_t)$, but the relative-error inequality in
\eqref{eq:inexact-update} must then be proved or verified separately.
The path construction, the drift bound, and a comparison of $c_t$ with
the FedAvg anchor $a_t$ do not supply that inequality. A guard accepting
$c_t$ only when its validation loss is no larger than that of $a_t$
guarantees only the stated comparison on that validation set. It does
not establish a comparison with $w_t$, population-risk descent,
test-risk improvement, or unconditional convergence of the proposed
aggregation procedure.

\section{Reference extension: affine adapters and a descent safeguard}
\label{sec:safeguarded}

The natural-domain experiments below add an explicit input adapter and
replace the validation-only guard in Equation~\eqref{eq:guard} with a
distributed training-risk safeguard. The latter admits a convergence
statement for the implemented extension. It requires substantially more
client access than the public-data procedure of Section~\ref{sec:algorithm}.

\subsection{What an affine adapter can establish}

Let a canonical pair $(Z,Y)$ have joint distribution $P_0$, and suppose
client $k$ observes $X_k=A_kZ+b_k$, with $A_k$ invertible. With the known
adapter $S_k(x)=A_k^{-1}(x-b_k)$, every client has the same adapted joint
distribution: $(S_k(X_k),Y)\sim P_0$. Therefore
\begin{equation}\label{eq:adapter-risk}
 \mathbb E\ell(f_w(S_k(X_k)),Y)
 =\mathbb E\ell(f_w(Z),Y)
\end{equation}
for every $w$ with finite risk. Equivalently, for a model whose first
affine layer is $Bz+d$, its client-coordinate layer is
$B_k=BA_k^{-1}$, $d_k=d-BA_k^{-1}b_k$; substituting $X_k$ gives exactly
$BZ+d$. This is an explicit compatible model transformation, rather
than an inference from a fitted weight-space path.

There is also a quantitative, conditional statement for approximate
adapters. Suppose
$g_w(x,y)=\nabla_w\ell(f_w(x),y)$ is $G$-Lipschitz in $x$, uniformly
over the parameters under consideration, and differentiation and
expectation can be interchanged. Define
\[
 \varepsilon_k=\mathbb E\|S_k(A_kZ+b_k)-Z\|,
 \qquad \bar\varepsilon=\sum_k\alpha_k\varepsilon_k.
\]
Writing $F_k^S$ for the adapted risk, $F^S=\sum_k\alpha_k F_k^S$,
and $F_0$ for the canonical risk, the triangle inequality gives
\begin{equation}\label{eq:adapter-gradient-error}
 \|\nabla F_k^S(w)-\nabla F_0(w)\|\leq G\varepsilon_k,
 \qquad
 \|\nabla F_k^S(w)-\nabla F^S(w)\|
 \leq G(\varepsilon_k+\bar\varepsilon).
\end{equation}
Indeed, couple each adapted observation with the same latent $(Z,Y)$,
bound the sample-gradient difference by its input distance, and then
average. This provides one sufficient mechanism for reducing the
heterogeneity term in Proposition~\ref{prop:local-drift}.

The natural subject and acquisition-batch domains need not satisfy this
common-latent model or preserve label proportions. Their implemented
adapters are diagonal affine standardizations estimated from each
client's training features and frozen before optimization. Matching
means and variances does not identify an arbitrary $A_k,b_k$, align
class-conditional distributions, or establish
Equation~\eqref{eq:adapter-gradient-error}. In particular, different
class proportions can change these moments even without a change of
measurement coordinates. We evaluate these adapters as an ablation,
not as recovered physical transformations. Unseen-domain inference uses
a predeclared source-only adapter; test features do not estimate it.

\subsection{An empirical objective with a certified fallback}

For fixed adapters and finite client training sets $D_k$, use the
regularized empirical objective
\begin{equation}\label{eq:safe-objective}
 F(w)=\sum_k\alpha_k\frac1{n_k}
       \sum_{(x,y)\in D_k}\ell(f_w(S_kx),y)
       +\frac\lambda2\|w\|^2,
 \qquad \alpha_k=\frac{n_k}{\sum_jn_j},\quad\lambda>0.
\end{equation}
All trainable parameters, including biases, enter the penalty. The
experiments use smooth tanh networks and cross-entropy. This section's
$F$ is the fixed training objective in~\eqref{eq:safe-objective}, not
held-out risk or the public calibration objective.

At round $t$, clients compute the exact full-training gradient
$g_t=\nabla F(w_t)$ by weighted aggregation. Starting at the same
$\eta_0>0$ each round, reduce the trial step by a fixed factor
$\tau\in(0,1)$ until the fallback $b_t=w_t-\eta_tg_t$ satisfies
\begin{equation}\label{eq:safe-armijo}
 F(b_t)\leq F(w_t)-\frac{\eta_t}{2}\|g_t\|^2.
\end{equation}
When $g_t=0$, take $b_t=w_t$. Let $a_t$ be the local-training FedAvg
candidate and $c_t$ the calibration-fitted curve centroid. Each client
then evaluates its exact training risk at each candidate. Set
\begin{equation}\label{eq:safe-selection}
 w_{t+1}\in\argmin_{v\in\{b_t,a_t,c_t\}} F(v).
\end{equation}
The fitting procedure can be stochastic and need not satisfy a
relative-gradient-error bound: only the fallback and candidate
evaluations certify descent. For the no-curve safeguard ablation use
$\{b_t,a_t\}$; for the calibration-matched safeguard use
$\{b_t,a_t,m_t\}$, where $m_t$ is the calibration-trained candidate.
The same argument applies to both.

\begin{theorem}[Safeguarded empirical stationarity]
\label{thm:safeguarded}
Suppose $F$ is twice continuously differentiable and coercive on
$\mathbb R^p$, and bounded below by $F_{\inf}>-\infty$. Adapters,
training sets, weights, and regularization are fixed. With exact
gradients, losses, and arithmetic, the backtracking procedure above
terminates each round. There exists a finite constant $L>0$, independent
of $t$, such that, with
$\eta_{\min}=\min\{\eta_0,\tau/L\}>0$,
\begin{equation}\label{eq:safe-descent}
 F(w_{t+1})\leq F(w_t)-\frac{\eta_t}{2}\|\nabla F(w_t)\|^2,
 \qquad \eta_t\geq\eta_{\min}.
\end{equation}
Consequently,
\begin{equation}\label{eq:safe-rate}
 \min_{0\leq t<T}\|\nabla F(w_t)\|^2
 \leq\frac{2(F(w_0)-F_{\inf})}{\eta_{\min}T},
 \qquad \|\nabla F(w_t)\|\longrightarrow0.
\end{equation}
The iterates have accumulation points, and every accumulation point is
stationary. Convergence to a global minimizer is not asserted.
\end{theorem}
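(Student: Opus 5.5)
The argument is a sublevel-set reduction followed by the standard Armijo backtracking analysis. The selection rule then transfers the fallback's descent to $w_{t+1}$.

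\emph{Step 1: a compact set containing every iterate.} Coercivity and continuity of $F$ make the sublevel set $\mathcal L_0=\{w:F(w)\leq F(w_0)\}$ compact. I will prove by induction that $w_t\in\mathcal L_0$ for all $t$. This needs $F(w_{t+1})\leq F(w_t)$, which Step 3 supplies. So the induction and the descent claim are established together, round by round.

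\emph{Step 2: a uniform smoothness constant.} Every backtracking trial point $w_t-\eta g_t$ with $\eta\leq\eta_0$ lies in a fixed bounded set. Indeed $\|g_t\|\leq M:=\max_{\mathcal L_0}\|\nabla F\|<\infty$, so every trial point lies in the convex compact set $\mathcal K=\{w:\operatorname{dist}(w,\operatorname{conv}\mathcal L_0)\leq\eta_0M\}$. Since $F\in C^2$, the Hessian norm is bounded on $\mathcal K$. Set $L=\max\{\max_{\mathcal K}\|\nabla^2F\|,\,1\}$, which is finite, positive and independent of $t$. Then $\nabla F$ is $L$-Lipschitz on the convex set $\mathcal K$. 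The main obstacle is precisely this step. A globally Lipschitz gradient is not assumed, so $L$ must be built from a compact set fixed before the iteration. Otherwise the argument becomes circular, with the constant depending on iterates not yet shown to be bounded.

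\emph{Step 3: termination and the descent inequality for round $t$.} Assume $w_t\in\mathcal L_0$ and $g_t\neq0$ (the case $g_t=0$ is trivial). For $\eta\leq\eta_0$, the segment from $w_t$ to $w_t-\eta g_t$ lies in $\mathcal K$. The descent lemma then gives
\[
F(w_t-\eta g_t)\leq F(w_t)-\eta\Bigl(1-\tfrac{L\eta}{2}\Bigr)\|g_t\|^2 .
\]
This is at most $F(w_t)-\tfrac{\eta}{2}\|g_t\|^2$ whenever $\eta\leq 1/L$, so the Armijo test~\eqref{eq:safe-armijo} is accepted for every $\eta\leq\min\{\eta_0,1/L\}$. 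The trial steps shrink geometrically by $\tau$, so the loop terminates. If $\eta_0$ itself is rejected, then $\eta_0>1/L$ and the previous trial $\eta_t/\tau$ was rejected. Hence $\eta_t/\tau>1/L$, that is, $\eta_t>\tau/L$. In either case $\eta_t\geq\eta_{\min}$.

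The selection~\eqref{eq:safe-selection} includes $b_t$ among the candidates. Therefore
\[
F(w_{t+1})\leq F(b_t)\leq F(w_t)-\tfrac{\eta_t}{2}\|\nabla F(w_t)\|^2 ,
\]
which is~\eqref{eq:safe-descent}. This inequality also gives $w_{t+1}\in\mathcal L_0$, closing the induction of Step 1. The other candidates $a_t,c_t$ (or $m_t$) can be arbitrary and never need to be in $\mathcal K$: the minimum is no larger than $F(b_t)$ regardless. The only requirement is that their risks are evaluated exactly.

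\emph{Step 4: rate and accumulation points.} Replace $\eta_t$ by $\eta_{\min}$ in~\eqref{eq:safe-descent}, telescope over $t<T$, and use $F\geq F_{\inf}$. This gives
\[
\eta_{\min}\sum_{t<T}\|\nabla F(w_t)\|^2\leq 2\bigl(F(w_0)-F_{\inf}\bigr).
\]
Bounding the minimum by the average yields~\eqref{eq:safe-rate}. The bound holds uniformly in $T$, so the series converges and $\|\nabla F(w_t)\|\to0$. All iterates lie in the compact set $\mathcal L_0$, so accumulation points exist. If $w_{t_j}\to\bar w$, continuity of $\nabla F$ gives $\nabla F(\bar w)=\lim_j\nabla F(w_{t_j})=0$, so $\bar w$ is stationary.
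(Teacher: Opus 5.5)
Your proposal is correct and follows the paper's proof essentially step for step: a compact initial sublevel set, a uniform Hessian bound on a fixed $\eta_0M$-enlargement of it chosen before the iteration, the descent lemma along each trial segment giving termination and $\eta_t>\tau/L$, transfer of the fallback's Armijo decrease through the selection step to close the induction, and then telescoping plus compactness for the rate and the stationary accumulation points. The only differences are cosmetic: you use the convex $\eta_0M$-neighborhood of $\operatorname{conv}\mathcal L_0$ where the paper uses an enclosing closed ball, and you make $L>0$ explicit via $\max\{\cdot,1\}$.
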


\begin{proof}
Let $\mathcal K=\{w:F(w)\leq F(w_0)\}$, compact by continuity and
coercivity, and let $M=\max_{w\in\mathcal K}\|\nabla F(w)\|<\infty$.
Take a closed ball containing $\mathcal K$ and every point within
distance $\eta_0M$ of it. The continuous Hessian is bounded in operator
norm on this ball by some finite $L>0$. Whenever $w_t\in\mathcal K$,
every fallback trial segment lies in the ball, because its step size
is at most $\eta_0$. The descent lemma on this segment yields
\[
 F(w_t-\eta g_t)
 \leq F(w_t)-\eta(1-L\eta/2)\|g_t\|^2.
\]
Thus every $\eta\leq1/L$ passes~\eqref{eq:safe-armijo}. Backtracking
terminates; if its initial trial passes, $\eta_t=\eta_0$, and otherwise
the immediately preceding failed trial must exceed $1/L$, giving
$\eta_t>\tau/L$. Selection~\eqref{eq:safe-selection} proves
\eqref{eq:safe-descent} and keeps the next iterate in $\mathcal K$.
Induction therefore justifies the same compact set and constant at
every round. Telescope~\eqref{eq:safe-descent} to obtain
$\sum_{t=0}^{T-1}\|\nabla F(w_t)\|^2
\leq2(F(w_0)-F_{\inf})/\eta_{\min}$. This proves the displayed rate
and, as $T\to\infty$, summability and convergence of the squared
gradient norms to zero. Compactness supplies accumulation points;
continuity of the gradient makes them stationary.
\end{proof}

For the implemented finite smooth network, softmax cross-entropy is
nonnegative and twice continuously differentiable in all parameters.
The positive full-parameter quadratic penalty therefore makes
\eqref{eq:safe-objective} coercive and lower bounded, satisfying the
theorem's objective assumptions. The result covers arbitrary fixed
non-IID training sets; affine alignment is not required for the
safeguard's convergence. With a fixed feature map and a linear softmax
head, this objective is additionally $\lambda$-strongly convex. In that
special case the same proof and
$\|\nabla F(w)\|^2\geq2\lambda(F(w)-F^*)$ give
$F(w_T)-F^*\leq(1-\lambda\eta_{\min})^T(F(w_0)-F^*)$ and a unique
global minimizer. This stronger conclusion does not apply to the
trained nonlinear networks.

The safeguard adds full-client gradient passes, candidate-loss passes,
and line-search queries; these are reported separately from ordinary
FedAvg costs. It guarantees descent of the regularized training
objective, not test accuracy, population risk, privacy, or superiority
over a separately trained baseline. At a shared starting point it can
choose a candidate with no greater training objective than FedAvg's;
separate trajectories do not inherit that comparison. Finite-precision
Armijo residuals are logged. A fixed numerical tolerance changes an
exact descent statement by that tolerance per round; it does not by
itself prove asymptotic stationarity. The code raises an error if its
finite line-search budget cannot obtain its recorded certificate.

\section{Local curves with a selected shared endpoint}
\label{sec:local-curves}

We now replace the public calibration access model and curve-centroid
proposal with a distributed construction. Each client fits a path from its
own local solution to a common candidate, using only its own training
records. The returned model is a selected shared endpoint. This tests a
different mechanism from the centroid rules above: locally unconstrained
parameters may permit a client to incorporate another domain without
sacrificing its existing fit. Redundancy does not establish that such a
compatible solution or a low-loss connecting path exists. Intersections of
client-specific mode connections have already been studied by
\citet{thakur2025clinical}; we claim no novelty for that broad idea.

\subsection{Local objectives, private controls, and a common endpoint}

Here $L_k(w)$ denotes the \emph{fixed regularized empirical} training loss
of client $k$, with the same full-parameter quadratic penalty as
Equation~\eqref{eq:safe-objective}; $L(w)=\sum_k\alpha_kL_k(w)$.
Input preprocessing is fixed before optimization. No calibration or test
records enter the fitting or selection objective. Suppress the round index.
After local training from $w_t$, client $k$ retains $v_k$. An aggregation
interface forms the FedAvg anchor $a=\sum_k\alpha_kv_k$ without requiring
individual $v_k$ to be disclosed. Set $z^0=a$ and initialize a local control
$c_k^0=(v_k+a)/2$. The quadratic B\'ezier path is
\begin{equation}\label{eq:local-bezier}
 \gamma_k(s;c_k,z)=(1-s)^2v_k+2s(1-s)c_k+s^2z,
 \qquad 0\leq s\leq1.
\end{equation}
Its endpoints are $v_k$ and $z$; controls and local endpoints remain on
clients in the intended protocol. Fit using the local surrogate
\begin{equation}\label{eq:local-curve-surrogate}
 Q_k(c_k,z)=L_k(z)
 +\frac{\beta}{|\mathcal S|}\sum_{s\in\mathcal S}
       [L_k(\gamma_k(s))-L_k(v_k)-\varepsilon]_+^2
 +\frac\rho2\|c_k-(v_k+z)/2\|^2,
\end{equation}
where $[r]_+=\max\{r,0\}$. The final term discourages excessive bending
away from the current straight-line midpoint; it depends on both $c_k$ and $z$.
The endpoint loss is explicit: an average path penalty alone does not
certify the eventual model. The implemented defaults are
$\mathcal S=\{1/4,1/2,3/4,1\}$, $\beta=1$, $\rho=10^{-3}$,
$\varepsilon=0.02$, and three fitting steps with learning rate $0.04$
for both controls and endpoint. These tolerances apply to the stated
regularized training loss; they are not accuracy guarantees.

Write $r_{k,s}=[L_k(\gamma_k(s))-L_k(v_k)-\varepsilon]_+$ and
$d_k=c_k-(v_k+z)/2$. Differentiation gives
\begin{align}
 \nabla_zQ_k&=\nabla L_k(z)-\rho d_k/2
 +\frac{2\beta}{|\mathcal S|}\sum_s r_{k,s}s^2
                      \nabla L_k(\gamma_k(s)),\label{eq:local-z-gradient}\\
 \nabla_{c_k}Q_k&=\frac{2\beta}{|\mathcal S|}\sum_s
 r_{k,s}\,2s(1-s)\nabla L_k(\gamma_k(s))
 +\rho d_k.\label{eq:local-c-gradient}
\end{align}
Clients update controls locally and contribute endpoint gradients to the
weighted aggregate $\sum_k\alpha_k\nabla_zQ_k$. Thus the common endpoint
is optimized in the full parameter space; this is distributed optimization
with a path penalty, rather than a one-dimensional search on a previously
fixed curve. Local control steps on $Q_k$ and weighted endpoint steps on
$\sum_k\alpha_kQ_k$ amount to block-scaled gradient updates. No global
optimality or feasibility guarantee follows from finitely many such steps.

\subsection{Selection and the limits of compatibility certificates}

Save the fitted endpoints and select with distributed training-loss queries:
\begin{equation}\label{eq:local-selection}
 \mathcal C_t=\{w_t,a,z^1,\ldots,z^J\},\qquad
 w_{t+1}\in\argmin_{u\in\mathcal C_t}L(u).
\end{equation}
With exact evaluations, $L(w_{t+1})\leq\min\{L(w_t),L(a)\}$. This is
monotone training loss and a comparison with the current round's FedAvg
candidate, not domination of a separate FedAvg trajectory. Inclusion of
$w_t$ can produce repeated unchanged iterates. Without the aggregate-gradient
fallback of Section~\ref{sec:safeguarded}, it does not establish
Theorem~\ref{thm:safeguarded}'s stationarity conclusion. The path penalty
and weighted selection also do not guarantee preservation of every client.

If the selected endpoint actually satisfies
$L_k(z)\leq L_k(v_k)+\varepsilon$ for every $k$, then its weighted loss
satisfies $L(z)\leq\sum_k\alpha_kL_k(v_k)+\varepsilon$.
This is a conditional compatibility certificate, not a consequence of
small surrogate loss alone. Even satisfaction at every sampled point does
not certify the entire path. If $s\mapsto L_k(\gamma_k(s))$ is
$B_k$-Lipschitz and all grid points, including $0$ and $1$, satisfy the
loss tolerance, then the nearest-grid-point argument gives
\begin{equation}\label{eq:local-grid-certificate}
 \max_{s\in[0,1]}L_k(\gamma_k(s))
 \leq L_k(v_k)+\varepsilon+B_kh/2,
\end{equation}
where $h$ is the largest grid gap. The implementation does not certify a
numerical $B_k$ and reports sampled violations rather than asserting
whole-path feasibility. No population-risk statement follows from these
finite training inequalities.

For noisy loss queries, if every candidate estimate obeys
$|\widetilde L(u)-L(u)|\leq\delta$, selecting its estimated minimizer gives
$L(w_{t+1})\leq\min_{u\in\mathcal C_t}L(u)+2\delta$.
Indeed, the selected estimate is no larger than the best candidate's
estimate, and converting each estimate to its true value costs at most
$\delta$. Requiring an estimated improvement of at least $2\delta$ over a
specified baseline certifies nonincrease relative to that baseline on the
same error event. Privacy noise therefore changes the exact monotonicity
claim; repeated adaptive queries also require an appropriate privacy and
statistical analysis.

\subsection{Data locality and matched controls}

The protocol needs local training records, broadcasts of shared candidates,
aggregated model contributions, endpoint gradients, and scalar loss queries.
It does not require client-to-client record sharing or a server-held labeled
calibration set. The main accuracy experiments emulate clients in one process
using exact aggregate-only interfaces; those runs use neither cryptographic
aggregation nor differential privacy. This is a data-flow design and
numerical evaluation, not an end-to-end privacy guarantee. Model
aggregates, gradients, losses, sample counts, preprocessing summaries, and
any released compatibility diagnostics can still reveal information.
A protected deployment needs an explicit participation and collusion model,
secure aggregation, and, for a formal privacy claim, bounded contributions,
noise introduced before an untrusted server observes them, and accounting
for all fitting and selection releases~\citep{kairouz2021distributed}.

Pairwise masking is a component of secure aggregation\citep{bonawitz2017secure}.
Our optional restricted prototype masks fixed-point contributions pairwise. It covers
model, gradient and loss contributions; source-moment preprocessing
still uses ordinary simulated sums and is not protected by this backend.
Ephemeral
client X25519 keys, HKDF-SHA256, and query-specific
SHAKE256 masks protect fixed-point contributions before the aggregation
interface receives them. It assumes an honest-but-curious server, an
externally authenticated common key roster, at least three clients and two
noncolluding clients. Dropout aborts the session. Malicious-party protection,
key authentication, network transport, process isolation, and differential
privacy are not implemented. This small prototype is not an audited
deployment protocol. With $K$ bounded contributions and scale $M$, its
per-coordinate quantization error is at most $K/(2M)$ under the enforced
no-overflow conditions. Its selector therefore uses the approximate-selection
allowance above, not an exact monotonicity assertion. A three-client
integration check has largest fixed-candidate loss error
$2.69\times10^{-8}$, below its bound $8.94\times10^{-8}$; this is numerical
verification, not a proof of privacy. The main accuracy table uses exact sums.

The direct-training control starts from the same anchor and uses
$J(|\mathcal S|+1)$ distributed gradient steps, matching endpoint and path
gradient evaluations and training examples processed. Its checkpoints are
saved at $J$ corresponding budget boundaries and selected with the same
old-model and anchor candidates. This matches gradient work, while
communication and scalar-loss query costs remain separately measurable.
A centroid ablation uses the same fitting procedure on its own trajectory
but proposes only
\begin{equation}\label{eq:local-centroid-ablation}
 \bar z=\sum_k\alpha_k\int_0^1\gamma_k(s)\,ds
       =\sum_k\alpha_k\frac{v_k+c_k+z^J}{3},
\end{equation}
then selects among $\{w_t,a,\bar z\}$. Consequently it receives fewer
proposal candidates; this comparison measures the specified aggregation
rules rather than isolating only their parameter locations. These controls
are necessary to distinguish benefits of curves from additional distributed
optimization and model-selection opportunities.

\section{Evaluation without server calibration data}
\label{sec:local-curve-experiments}

The current method, \textbf{CovFedAvg}, fits local curves and selects a
shared endpoint using client training data only. It is distinct from the
public-calibration variants retained in Section~\ref{sec:natural-experiments}.
All accuracy-benchmark runs use an exact aggregate interface in one process;
they evaluate optimization and data flow, not deployment security.

\subsection{Complementary support: an identifiable motivating example}

Consider $y=2|x|+0.3$ with 40 magnitudes uniformly spaced in $[0.1,1]$.
Client A sees positive inputs; B sees their negatives. The label multisets
are identical. A linear model of the feature $|x|$ would have the same
identifiable parameters on either client and would not demonstrate the
proposed failure. Instead use the redundant shared model
\[
 f_{u,v,b}(x)=u\operatorname{ReLU}(x)+v\operatorname{ReLU}(-x)+b.
\]
Local fits from zero inactive branches approach $(2,0,0.3)$ and $(0,2,0.3)$,
whose average $(1,1,0.3)$ loses half of each learned slope. The compatible
solution is $(2,2,0.3)$. Client A can vary $v$ without changing its predictions;
client B can vary $u$. Their local zero-loss sets intersect at the shared
solution. This is a support-coverage mechanism, not an invertible
transformation assumption or a consequence of equal label frequencies.

The numerical diagnostic uses 1,800 local full-batch updates at learning
rate $0.3$, followed by 1,100 path-fitting steps at $0.3$, with $\rho=0$ and
$\varepsilon=0$. These illustrative budgets differ from the small fixed
benchmark budget below. One-merge FedAvg has training MSE $0.37346$; the
centroid of the same final fitted paths has MSE $0.12400$; selecting the
shared endpoint gives $3.24\times10^{-9}$. Matched direct distributed
fine-tuning also solves this convex task, with MSE $2.13\times10^{-30}$.
Both fitting procedures use 11,000 client full-gradient evaluations.
This example establishes feasibility, not an advantage over that control.
Repeated FedAvg can also recover the solution in this example; the
one-merge failure is not an asymptotic impossibility statement.

The soft path penalty is not an exact constraint solver. Even though the
selected endpoint is nearly correct, a 101-node path diagnostic has weighted
maximum violation $0.00361$ at zero tolerance. We retain this violation
rather than asserting exact preservation along the whole curve.

\begin{figure}[tbp]
\centering
\includegraphics[width=\linewidth]{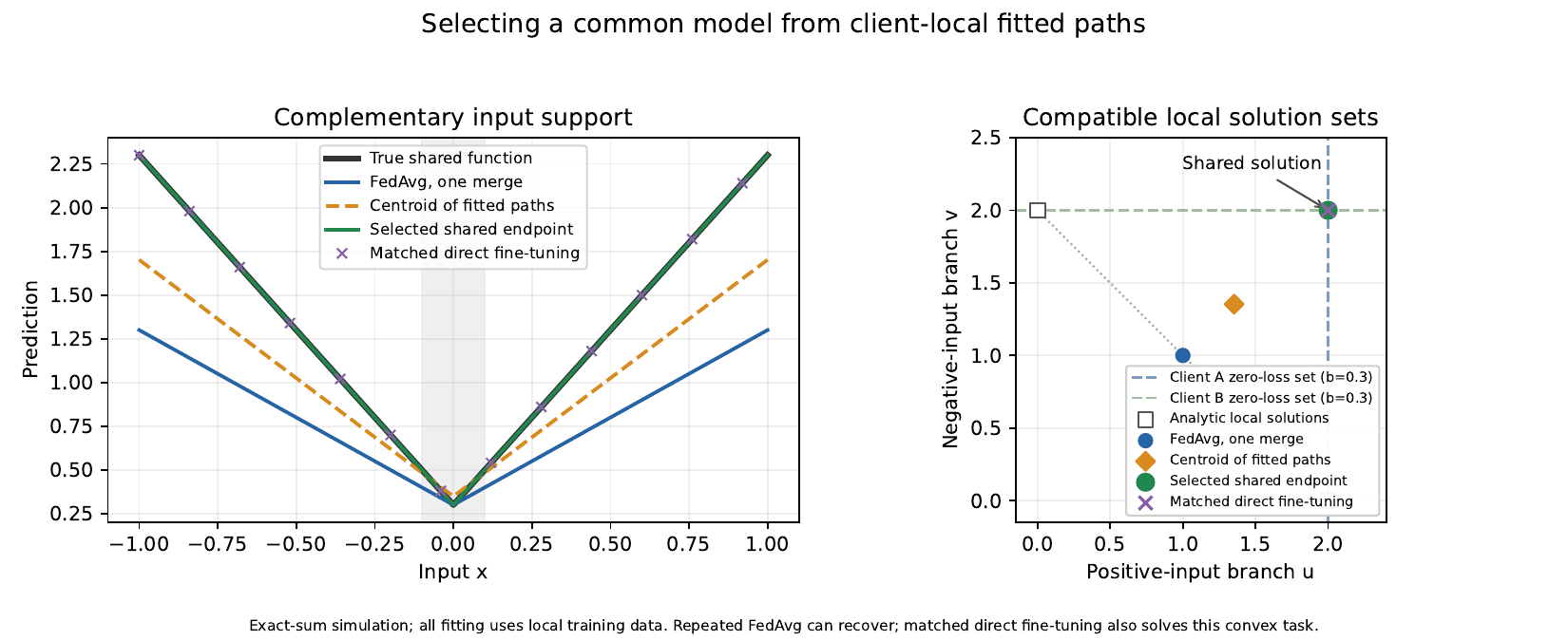}
\caption{Complementary input support and selection of a shared model.
The selected endpoint recovers both branches; averaging the same fitted
paths does not. Matched direct fine-tuning also recovers the solution.
The right panel projects parameters onto $(u,v)$; its analytic zero-loss
lines assume $b=0.3$, whereas fitted methods can have slightly different
intercepts. The dotted segment connects the analytic local solutions and
does not depict a learned low-loss curve.}
\label{fig:local-abs-example}
\end{figure}

\subsection{Fixed protocol and controls}

The broader study has four tasks, five seeds $(11,29,47,71,89)$, hidden
widths 16 and 64, and 20 rounds with five local epochs. The architecture
is the same regularized tanh MLP for every method within a task and width.
Batch size is 64, local learning rate is $0.04$, and weight decay is $10^{-4}$.
All methods share initialization and local shuffles. The path parameters
are fixed as specified in Section~\ref{sec:local-curves}, before benchmark
test evaluation. No test-driven hyperparameter search is performed.

The synthetic support task uses a fixed, noisy, three-class tanh teacher
in six input dimensions and six clients with disjoint intervals of the
first coordinate. The conditional labeling rule is shared; label frequencies
may change because the feature support changes. It has 1,536 training and
1,536 test examples. It is a controlled covariate-shift diagnostic, not
a naturally collected dataset.

The other tasks retain the recorded UCI HAR subject identities, Gas
acquisition batches, and future-batch Gas split of
Section~\ref{sec:natural-experiments}. The same 5,066, 8,330, and 7,861
source training records and the same test records are retained.
Previously reserved calibration and validation records are \emph{unused},
not reassigned to the server or clients. A source-global scaler is computed
from aggregated training counts, sums, and squared sums and then frozen.
No individual feature moments are required by the server-facing API;
sample counts and sample-based client weights are public in this simulation.
The publisher's upstream preprocessing remains outside our provenance audit.

Five methods are compared: FedAvg; FedAvg with selection between the old
model and the new average; CovFedAvg; a same-procedure curve-centroid
ablation; and direct distributed fine-tuning. Each curve-fitting step uses
one endpoint and four path gradients per client. The matched direct control
therefore takes 15 full-client gradient steps per round, saving checkpoints
after steps 5, 10, and 15, and uses the same five-candidate selection budget
as CovFedAvg. This matches gradient evaluations and processed examples,
not communication: direct fine-tuning communicates 15 gradient updates,
whereas the curve method communicates three endpoint-gradient updates.
The centroid ablation uses the same fitting procedure but selects only
among its centroid, FedAvg, and the old model. It has fewer selection
opportunities, and the methods follow different trajectories. Its comparison
therefore does not isolate only the averaging operation.

Training-risk reporting and sampled-path diagnostics are accounted for
separately from algorithm queries. Test evaluation occurs only after the
last round. The 200 method runs produce 4,000 round records. Raw metrics,
costs, source-only preprocessing, configuration, split provenance, and
checksums are provided in \texttt{results/local-curves/}.

\subsection{Results and limits}

CovFedAvg improves mean macro balanced accuracy over FedAvg in all eight
task--width settings, by $0.14$--$1.12$ percentage points. It improves in
35 of the 40 paired seed runs. Relative to the curve-centroid ablation,
mean improvements range from $0.07$ to $0.76$ percentage points, with
31 of 40 paired wins. These are descriptive comparisons on the recorded
benchmarks, not a statistical significance claim or independent replications
across populations. The gains are modest and largest on participating Gas.

At width 16, macro domain balanced accuracy is $91.58\%$ for FedAvg and
$91.73\%$ for CovFedAvg on HAR, compared with $92.02\%$ for matched direct
fine-tuning. On participating Gas batches the corresponding values are
$88.88\%$, $89.84\%$, and $91.65\%$. At width 64, Gas obtains $89.07\%$,
$90.19\%$, and $92.89\%$, respectively. Increased width does not produce
a consistent improvement across tasks. On future Gas, width-16 accuracy
changes from $76.51\%$ to $76.98\%$, close to the direct control's $77.01\%$.
The reported seed standard deviations and worst-domain scores are given
in Table~\ref{tab:local-curve-results}; these are repeated runs on fixed
benchmark populations, not independent institutions.

All 3,200 selected-method rounds satisfy nonincrease of the evaluated
training objective and no greater loss than their own current-round
FedAvg candidate. These checks validate the selector's empirical comparison,
not stationarity or test improvement. In particular, average improvement
does not imply that every client is preserved: on HAR the worst-domain
balanced accuracy can decrease. The code records endpoint and sampled-path
violation summaries; finite soft-penalty optimization does not guarantee
that every client's tolerance is met.

The mechanism works in the motivating example and produces modest gains
over FedAvg on participating Gas. It does not establish a reliable advantage
over direct distributed optimization with the same gradient-example budget.
The matched direct control has higher mean balanced accuracy in all eight
settings, while requiring 15 gradient-communication updates per outer round
instead of three endpoint-gradient updates. This is a measured accuracy and
communication tradeoff; it does not imply fivefold lower total communication
because common local training, selection, and other messages also contribute.
Further claims about geometry or redundancy require stronger evidence.

\begin{table}[p]
\centering\small
\caption{Local-data study: mean $\pm$ sample standard deviation over five
seeds. BAcc. averages recalls over classes present in each test domain;
macro averages domains and worst takes the smallest domain score. No
calibration records are used. All methods use the same training data within
each task.}
\label{tab:local-curve-results}
\resizebox{\textwidth}{!}{\begin{tabular}{llrr}
\toprule
Domain / width & Method & Macro BAcc. (\%) & Worst BAcc. (\%) \\
\midrule
support shift / 16 & FedAvg & $73.13\pm1.90$ & $59.63\pm1.29$ \\
support shift / 16 & FedAvg + selection & $73.13\pm1.90$ & $59.63\pm1.29$ \\
support shift / 16 & Matched distributed fine-tuning & $74.11\pm1.70$ & $60.41\pm1.37$ \\
support shift / 16 & CovFedAvg & $73.65\pm2.08$ & $59.76\pm1.18$ \\
support shift / 16 & Same-fit curve centroid & $73.49\pm1.96$ & $59.67\pm1.24$ \\
support shift / 64 & FedAvg & $73.29\pm1.85$ & $60.03\pm0.83$ \\
support shift / 64 & FedAvg + selection & $73.29\pm1.85$ & $60.03\pm0.83$ \\
support shift / 64 & Matched distributed fine-tuning & $73.82\pm1.76$ & $60.97\pm0.80$ \\
support shift / 64 & CovFedAvg & $73.57\pm1.94$ & $60.29\pm0.75$ \\
support shift / 64 & Same-fit curve centroid & $73.30\pm1.89$ & $60.12\pm0.79$ \\
har / 16 & FedAvg & $91.58\pm0.60$ & $80.19\pm2.05$ \\
har / 16 & FedAvg + selection & $91.58\pm0.60$ & $80.19\pm2.05$ \\
har / 16 & Matched distributed fine-tuning & $92.02\pm0.46$ & $79.04\pm2.87$ \\
har / 16 & CovFedAvg & $91.73\pm0.65$ & $79.54\pm2.70$ \\
har / 16 & Same-fit curve centroid & $91.66\pm0.57$ & $80.17\pm2.26$ \\
har / 64 & FedAvg & $91.51\pm0.63$ & $78.59\pm2.13$ \\
har / 64 & FedAvg + selection & $91.51\pm0.63$ & $78.59\pm2.13$ \\
har / 64 & Matched distributed fine-tuning & $91.91\pm0.37$ & $77.61\pm2.31$ \\
har / 64 & CovFedAvg & $91.65\pm0.52$ & $78.02\pm2.42$ \\
har / 64 & Same-fit curve centroid & $91.58\pm0.63$ & $78.70\pm2.34$ \\
gas / 16 & FedAvg & $88.88\pm0.58$ & $56.59\pm2.05$ \\
gas / 16 & FedAvg + selection & $88.88\pm0.58$ & $56.59\pm2.05$ \\
gas / 16 & Matched distributed fine-tuning & $91.65\pm1.04$ & $64.49\pm3.09$ \\
gas / 16 & CovFedAvg & $89.84\pm0.93$ & $59.01\pm1.95$ \\
gas / 16 & Same-fit curve centroid & $89.08\pm0.52$ & $57.33\pm1.88$ \\
gas / 64 & FedAvg & $89.07\pm0.33$ & $55.49\pm1.83$ \\
gas / 64 & FedAvg + selection & $89.23\pm0.45$ & $55.49\pm1.83$ \\
gas / 64 & Matched distributed fine-tuning & $92.89\pm0.91$ & $67.69\pm7.36$ \\
gas / 64 & CovFedAvg & $90.19\pm0.73$ & $57.53\pm1.76$ \\
gas / 64 & Same-fit curve centroid & $89.54\pm0.44$ & $56.61\pm2.20$ \\
gas future / 16 & FedAvg & $76.51\pm1.69$ & $71.61\pm3.38$ \\
gas future / 16 & FedAvg + selection & $76.58\pm1.69$ & $71.70\pm3.37$ \\
gas future / 16 & Matched distributed fine-tuning & $77.01\pm1.83$ & $72.75\pm3.73$ \\
gas future / 16 & CovFedAvg & $76.98\pm1.63$ & $72.54\pm3.28$ \\
gas future / 16 & Same-fit curve centroid & $76.74\pm1.73$ & $72.07\pm3.47$ \\
gas future / 64 & FedAvg & $76.97\pm1.86$ & $72.18\pm3.74$ \\
gas future / 64 & FedAvg + selection & $76.92\pm1.82$ & $72.08\pm3.67$ \\
gas future / 64 & Matched distributed fine-tuning & $77.64\pm1.32$ & $73.77\pm2.68$ \\
gas future / 64 & CovFedAvg & $77.60\pm1.49$ & $73.51\pm3.07$ \\
gas future / 64 & Same-fit curve centroid & $77.31\pm1.69$ & $72.91\pm3.46$ \\
\bottomrule
\end{tabular}
}
\end{table}

\begin{figure}[p]
\centering
\includegraphics[width=\textwidth]{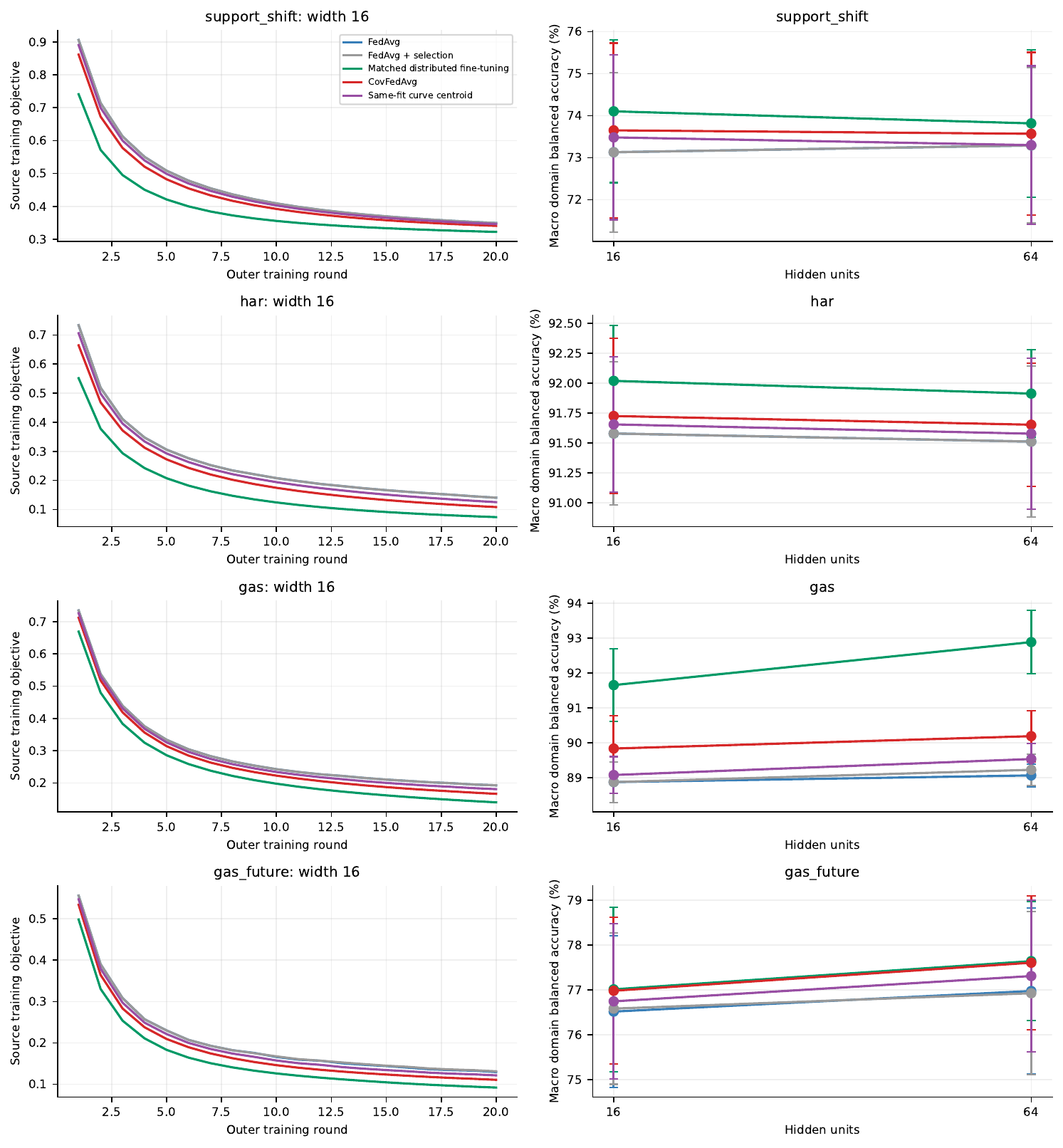}
\caption{CovFedAvg and controls without centralized calibration.
Left: mean source training objective at width 16. Right: final macro domain
balanced accuracy versus width, with sample standard deviation over five
seeds. Each outer round includes method-specific additional queries,
whose costs are reported separately. Source-objective selection does not guarantee improved held-out
accuracy or preservation of each client.}
\label{fig:local-curves-evaluation}
\end{figure}

\FloatBarrier
\section{Reference evaluation of public-calibration variants}
\label{sec:natural-experiments}

This earlier evaluation uses recorded subject and acquisition-batch identities,
without author-generated feature transformations or label-skew partitions.
The questions are whether native heterogeneity affects optimization, whether
frozen affine standardizers help, and whether a curve candidate contributes
beyond the exact safeguard and its additional calibration labels. Non-IID
data do not imply that FedAvg must perform poorly. We retain strong baseline
results and separate heterogeneity diagnostics from the observed method
ranking. The controlled-shift pilot is retained in
Section~\ref{sec:experiments} as supplementary feasibility evidence.

\subsection{Tasks, native domains, and split provenance}

\paragraph{Wearable activity recognition.}
UCI Human Activity Recognition Using Smartphones (HAR) provides 10,299
observations from 30 people, six activities, 561 published time/frequency
features, and recorded subject IDs\cite{reyesortiz2013har}. Each source-training
subject is a client. We sort the 21 official training subject IDs and assign
the first 15 to client training, the next three to public calibration, and
the last three to an unused validation pool. All nine official test subjects
remain untouched and unseen during training. The resulting counts are
5,066 training, 1,177 reserved calibration, 1,109 unused validation, and
2,947 test observations. These roles are fixed across seeds. The publisher's
windows overlap by 50\%; assigning complete subjects to one role prevents
neighboring windows of one person from crossing roles. Features arrive in
the publisher's normalized representation, whose original normalization fit
we cannot independently audit. Every additional transformation in our code
is estimated from our training subset only. This is a subject-shift task
using one documented smartphone setup, rather than a multi-device study.

\paragraph{Gas classification across acquisition periods.}
UCI Gas Sensor Array Drift supplies 13,910 observations, 128 response
features, six gases, and ten recorded acquisition batches from the same
16-sensor array\cite{vergara2012gas}. In the participating-domain task, every
native batch is a client. Within each batch and observed class, independently
shuffle records and allocate 60\%/10\%/10\%/20\% to training/calibration/unused
validation/test using cumulative integer boundaries. Realized totals are
8,330/1,388/1,375/2,817. Missing classes and unequal client sizes are retained;
the task is not made homogeneous by balancing the native domains. This
within-batch holdout measures prediction on participating temporal domains,
not forecasting into a new acquisition period. Finer acquisition timestamps
are unavailable, so residual within-batch dependence is possible.

A secondary future-domain task trains on batches 1--8, with within-batch
80\%/10\%/10\% training/calibration/unused validation partitions, and tests on
all records from batches 9--10. The respective totals are
7,861/980/999/4,070. No held-out batch features or labels estimate a
transformation or choose an update. Some publisher batches combine
nonconsecutive acquisition months; we preserve their native IDs instead of
inventing finer timestamps. These domains represent temporal changes of one
array, rather than ten independent physical devices.

All archive digests, source URLs, row identifiers, class histograms, native
IDs, split hashes, and exact calibration selections accompany the code.
HAR subject roles are fixed; Gas seeds also change the within-batch row
partition. Repeated seeds therefore measure training randomness on HAR and
both split and training randomness on Gas, rather than independent redraws
of people or sensor arrays. The two datasets are engineering classification
benchmarks; their results do not establish deployment or clinical utility.

\subsection{Training, representation, and access-matched controls}

All methods use a one-hidden-layer MLP with 16 tanh units and six softmax
outputs, float64 arithmetic, full participation, batch size 64, SGD learning
rate $0.04$, and the full-parameter penalty $10^{-4}\|w\|^2/2$. There are
20 rounds and five seeds, $11,29,47,71,89$. The primary native-domain matrix
uses $E\in\{1,5,10\}$ local epochs per round; future Gas uses $E=5$.
Each method shares the same per-seed initialization and per-round/client
minibatch permutations. Hyperparameters and reporting round are fixed
before the full experiment; validation pools are unused, and test metrics
are computed only at the final round. Common fixed settings enable a
controlled comparison but do not establish each baseline's best tuned
performance.

Globally scaled methods use a mean and standard deviation estimated from
the pooled source training features. Adapter methods instead use one
diagonal affine standardizer per training client. Scales are floored at
5\% of the pooled source standard deviation, with an absolute numerical
floor of $10^{-8}$. Every adapter is frozen throughout training. A known
participating Gas client uses its own training adapter at inference; an
unseen HAR subject or future Gas batch uses the pooled source-training
adapter. Calibration records use the same native-identity rule. Thus the
adapter methods personalize known domains' representations; they do not
adapt to unseen targets. This source-only fallback can leave an important
train--target representation mismatch. Neither moment standardization nor
good accuracy establishes the latent affine model of
Section~\ref{sec:safeguarded}.

In figures and tables, \covfedcal denotes the safeguarded curve-aggregation
extension (canonical experiment ID \texttt{SafeCurve}); \covfedadapt adds the
fixed client affine adapters (ID \texttt{AdapterSafeCurve}). The descriptive qualifiers distinguish these
reference methods from the main local-curve method. These natural-domain
experiments evaluate those extensions; the public-validation-guard variant
of Section~\ref{sec:algorithm} is evaluated in the controlled pilot in
Section~\ref{sec:experiments}.

The eleven methods are FedAvg, FedProx ($\mu=0.01$), SCAFFOLD, SafeAvg,
SafeCal, \covfedcal, AdapterAvg, AdapterSafeAvg, AdapterSafeCal,
\covfedadapt, and a pooled Centralized reference. SCAFFOLD uses
persistent client/global control variates and Option II\cite{karimireddy2020scaffold}.
We retain sample-weighted objectives and native client sizes, using each
client's actual minibatch count in its control update. This is an explicit
unequal-step extension of the common-step pseudocode; the original paper's
rates are not asserted for this implementation. SCAFFOLD receives no public
calibration labels and its extra control-vector communication is counted.
Centralized SGD sees the same pooled source-training records and performs
$E$ epochs per round equivalent, with the same batch size and learning
rate; it is a pooled-data reference without federated access constraints.

SafeAvg selects between the FedAvg proposal and the certified full-client
gradient fallback from Section~\ref{sec:safeguarded}. \covfedcal adds the
curve-centroid proposal; SafeCal instead adds direct calibration training
of the anchor. Their adapter versions change only the fixed representation.
Backtracking starts at $0.04$ each round and halves the step until the
exact weighted training objective satisfies Equation~\eqref{eq:safe-armijo}.
Every candidate is compared using that same objective. A finite limit of
80 trial steps causes an error on failure, rather than accepting an
uncertified update. Complete per-round certificates are retained.

Every calibrated method receives the same deterministic uniform sample of
128 labeled records from its reserved source calibration pool. Curve fitting
takes one simultaneous control-gradient step with learning rate $0.4$,
midpoint penalty $0.001$, and quadrature points $1/4,1/2,3/4$, followed by
the analytic centroid. Direct calibration takes $3K$ full-calibration SGD
steps at learning rate $0.04$, matching the curve's public gradient calls
and processed calibration examples. This is 45 calls per HAR round, 30
per participating-Gas round, and 24 per future-Gas round. Objectives and
effective step sizes still differ. Calibration-gradient evaluations include
the same full-parameter quadratic penalty as the local models. The two calibrated safeguards share
the same full-client oracle capability, while their evolving trajectories
can have different line-search costs. SafeAvg isolates the safeguard without
extra labels or curve fitting. These controls are essential because the
theorem's descent mechanism does not require a useful curve.

\subsection{IID control, diagnostics, and reported quantities}

At $E=5$, a companion IID-training control shuffles the exact source-training
record pool across clients while preserving every client size. Native
calibration, validation, and test records remain unchanged. This changes
client membership without introducing synthetic feature transformations or
making the target task easier. The control runs FedAvg, \covfedcal,
AdapterAvg, \covfedadapt, and Centralized. For globally scaled methods, the same
pooled source standardizer is retained, so native versus IID compares
training organization under identical held-out targets. Adapter-based IID
clients have new identities and use the global source fallback on native
test domains; their native--IID gap also reflects an inference-policy
change and must not be interpreted as a pure optimization effect. IID
training does not remove the unseen-subject or future-batch target shift.

Training-only diagnostics include weighted client label-prior total
variation from the pooled prior, standardized feature-mean dispersion,
class-conditional feature-mean dispersion with recorded class/domain
support, exact initial gradient dissimilarity, and local-update disagreement.
Class-conditional diagnostics distinguish some feature variation from
unconditional shifts caused by label proportions; they still describe only
moments and do not characterize full joint distributions. Metrics are
reported before and after adapters and for the IID control. Neither
diagnostic values nor baseline outcomes select a new client partition.

Final metrics include sample-weighted test accuracy, macro domain accuracy,
worst-domain accuracy, cross-entropy without the parameter penalty,
class-balanced domain accuracy, and worst-domain class-balanced accuracy.
Balanced accuracy averages recall over classes observed within a domain;
the accompanying class supports, six-by-six confusion matrices, and
per-class recalls make absent classes explicit. Means and sample standard
deviations use the five complete seeds. Paired per-seed curve differences
are reported against FedAvg, the safeguard alone, and matched calibration;
five seeds do not provide a broad domain-population confidence statement.

The cost ledger separates local gradients, public calibration gradients,
full-source gradients, candidate/source-loss queries, final test evaluation,
and centralized work. It counts model, control-vector, gradient, and
candidate payloads and scalar losses as float64 bytes, excluding transport
overhead and network latency. Offline heterogeneity diagnostics are recorded
separately. Exact safeguard access is a stronger communication and statistic
interface than ordinary FedAvg; a gain cannot be presented as a free
aggregation change. No privacy mechanism is implemented. In particular,
diagonal input standardization is not FedBN, and neither FedBN nor FRAug
is claimed as a reproduced numerical baseline here.

\subsection{Results and scope}
\label{subsec:natural-results}

Table~\ref{tab:natural-results} reports the complete $E=5$ comparison;
Figure~\ref{fig:natural-convergence} shows the source objective and local-work
sensitivity. The native training domains have both feature and label differences. Across
the five seeds, weighted mean label-prior total variation is $0.052$ for HAR
and $0.153$ for participating Gas; class-conditional mean dispersion is
$0.415$ and $0.445$, respectively, in the globally standardized
representation. Thus the measured feature differences are not explained
only by different class frequencies. Per-client standardization makes
unconditional client means coincide by construction, but leaves conditional
dispersion of $0.266$ and $0.374$. That zero unconditional mean gap is not
evidence that joint distributions have been aligned.

The IID-training control makes the optimization effect clear for Gas.
At $E=5$, FedAvg's macro balanced accuracy is $88.88\%$ on native batches
and $95.48\%$ after shuffling the same source-training pool, a gain of
$6.60$ percentage points with the same held-out targets. Worst-domain
balanced accuracy rises from $56.59\%$ to $85.08\%$. Class-conditional
feature dispersion falls from $0.445$ to $0.063$, and initial gradient
dissimilarity from $0.740$ to $0.018$. This is a substantial native-grouping
effect; high ordinary accuracy alone would obscure the poorest domain.
HAR has a much smaller native--IID FedAvg gap: macro balanced accuracy
is $91.58\%$ versus $91.87\%$. Its class-conditional dispersion falls
from $0.415$ to $0.098$ and gradient dissimilarity from $0.233$ to $0.028$,
yet both organizations learn useful predictors. Recorded heterogeneity
therefore does not imply a large accuracy penalty on every task.

The learned curves are not competitive with the strongest comparator in
the native Gas task. At $E=5$, globally scaled \covfedcal obtains
$89.01\pm0.55\%$ macro balanced accuracy, compared with FedAvg's
$88.88\pm0.58\%$, SafeCal's $90.78\pm1.43\%$, SCAFFOLD's
$94.75\pm0.27\%$, and the pooled reference's $96.82\pm0.54\%$.
The curve's $0.13$-point gain over FedAvg does not close the native--IID
gap and is smaller than direct calibration's improvement. At $E=10$,
SCAFFOLD reaches $96.14\%$ macro balanced accuracy, versus $89.36\%$
for \covfedcal and $89.03\%$ for FedAvg. Thus the data expose a real
weakness of native FedAvg without establishing that the proposed curves
are an effective solution to it.

On HAR, \covfedcal obtains $91.71\pm0.39\%$ macro balanced accuracy at
$E=5$, FedAvg $91.58\pm0.60\%$, SafeCal $91.78\pm0.59\%$, and
SCAFFOLD $91.83\pm0.72\%$. At $E=10$, \covfedcal is slightly below
FedAvg ($91.61\%$ versus $91.84\%$). These small, mixed differences
provide no consistent advantage attributable to curves or to the
safeguard alone. Paired differences are supplied rather than treating
the five seeds as evidence of universal superiority.

Adapters have a mixed effect on participating domains and a clear limitation
on unseen ones. In native Gas at $E=5$, \covfedadapt improves worst-domain
balanced accuracy from the globally scaled curve's $57.14\%$ to $69.85\%$,
but its macro balanced accuracy is only $89.16\%$; AdapterAvg already
achieves $89.08\%$, and SCAFFOLD's worst-domain score is $80.95\%$.
On future Gas, FedAvg, \covfedcal, and SCAFFOLD obtain $76.51\%$,
$76.59\%$, and $76.36\%$ macro balanced accuracy, whereas
\covfedadapt falls to $66.95\%$. The pooled reference reaches only
$77.08\%$ on these held-out batches. This task includes target shift
that correcting source-client optimization alone does not remove. The
adapter fallback is source only and need not match representations learned
with per-client standardization. These outcomes argue against using
moment alignment as an automatically valid coordinate correction.

Every one of the 4,600 safeguarded rounds satisfies its recorded exact
source-objective comparison: the minimum numerical descent slack is
$1.68\times10^{-4}$ and the largest objective change is
$-3.52\times10^{-4}$. The gradient fallback is selected in 367 rounds;
otherwise another evaluated candidate supplies at least its certified
descent. This verifies the implemented safeguard on the completed runs,
while Theorem~\ref{thm:safeguarded} supplies the exact-arithmetic result.
Neither observation implies that a safeguarded trajectory dominates an
independent FedAvg trajectory or generalizes better. For example, at $E=5$
on HAR, \covfedcal's final source objective is $0.1420$, compared with
FedAvg's $0.1405$, despite the safeguard's monotonicity along its own path.

The additional access is substantial. At $E=5$, \covfedcal processes
115,200 public-calibration gradient examples on HAR and 76,800 on native
Gas, exactly matching SafeCal. Its additional full-source gradient passes
process 101,320 and 166,600 examples, and candidate-loss passes process
303,960 and 499,800 examples, respectively. These counts supplement ordinary
local training rather than replacing it. The algorithm's parameter/statistic
payload is approximately $3.50$ times FedAvg's in both settings, versus
$2.00$ times for SCAFFOLD; baseline loss-monitoring traffic is accounted for
separately. Computation and communication costs therefore reinforce the
conclusion that this implementation does not establish a practical curve
advantage over the tested controls. The complete study contains 435 method
runs and 8,700 rounds; its recorded single-thread runtime is approximately
288 seconds in the supplied environment, excluding download and the
independent reproduction.

\begin{table}[tbp]
\centering
\caption{Recorded native-domain tasks at $E=5$, round 20. Macro BA averages
domain balanced accuracies; worst BA is the poorest domain. Entries are mean
$\pm$ sample standard deviation across five seeds. All eleven methods are
shown, including degraded adapter results on future domains.}
\label{tab:natural-results}
\small
\resizebox{\textwidth}{!}{\begin{tabular}{lrrrrrr}
\toprule
 & \multicolumn{2}{c}{HAR, unseen people} & \multicolumn{2}{c}{Gas, known batches} & \multicolumn{2}{c}{Gas, future batches} \\
Method & Macro BA & Worst BA & Macro BA & Worst BA & Macro BA & Worst BA \\
\midrule
FedAvg & $91.6\pm0.6$ & $80.2\pm2.0$ & $88.9\pm0.6$ & $56.6\pm2.1$ & $76.5\pm1.7$ & $71.6\pm3.4$ \\
FedProx & $91.6\pm0.6$ & $80.3\pm2.0$ & $88.8\pm0.6$ & $56.2\pm1.6$ & $76.5\pm1.7$ & $71.6\pm3.4$ \\
SCAFFOLD & $91.8\pm0.7$ & $80.8\pm3.3$ & $94.7\pm0.3$ & $81.0\pm4.8$ & $76.4\pm1.4$ & $71.1\pm2.9$ \\
SafeAvg & $91.5\pm0.6$ & $80.4\pm1.8$ & $88.8\pm0.5$ & $56.6\pm2.1$ & $76.6\pm1.5$ & $71.8\pm3.1$ \\
SafeCal & $91.8\pm0.6$ & $80.2\pm2.4$ & $90.8\pm1.4$ & $66.1\pm10.5$ & $76.5\pm1.8$ & $71.8\pm3.7$ \\
CovFedAvg (calibration) & $91.7\pm0.4$ & $80.4\pm2.1$ & $89.0\pm0.6$ & $57.1\pm1.8$ & $76.6\pm1.6$ & $71.9\pm3.3$ \\
\midrule
AdapterAvg & $90.8\pm0.6$ & $82.3\pm1.3$ & $89.1\pm1.1$ & $68.8\pm8.2$ & $66.9\pm1.9$ & $64.0\pm1.1$ \\
AdapterSafeAvg & $90.7\pm0.5$ & $82.3\pm1.2$ & $89.1\pm1.0$ & $70.3\pm5.7$ & $66.8\pm1.9$ & $64.0\pm1.2$ \\
AdapterSafeCal & $90.9\pm0.3$ & $81.2\pm1.6$ & $90.1\pm1.7$ & $70.6\pm7.9$ & $68.1\pm3.9$ & $65.6\pm3.3$ \\
CovFedAvg (calibration + adapters) & $91.1\pm0.6$ & $82.2\pm1.1$ & $89.2\pm1.1$ & $69.8\pm5.7$ & $67.0\pm1.9$ & $64.0\pm1.2$ \\
\midrule
Centralized & $92.9\pm0.4$ & $80.2\pm2.7$ & $96.8\pm0.5$ & $88.8\pm3.8$ & $77.1\pm1.8$ & $72.6\pm3.4$ \\
\bottomrule
\end{tabular}
}
\end{table}

\begin{figure}[tbp]
\centering
\includegraphics[width=0.92\textwidth]{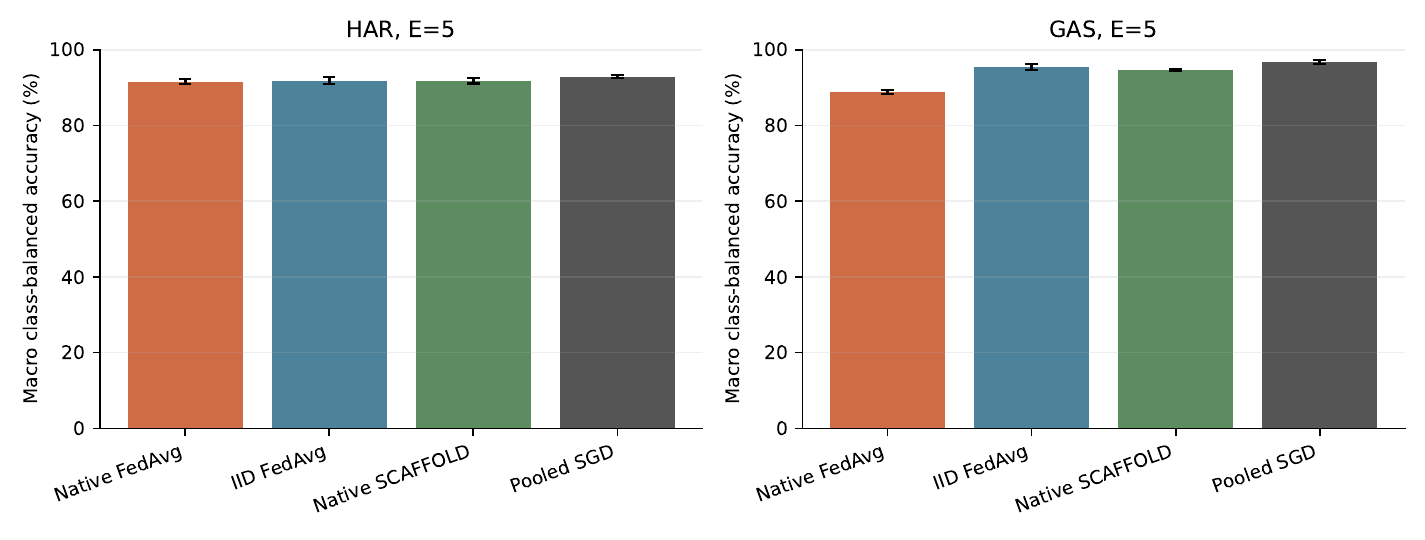}
\caption{At $E=5$, FedAvg under native client grouping and an IID-training
reshuffle of the identical source records and client sizes; native SCAFFOLD
and pooled Centralized are references. Bars show macro balanced accuracy
with sample standard deviations across five seeds. Held-out native targets
and the source-only global standardizer remain fixed. The much larger Gas
gap shows that native heterogeneity can affect optimization; HAR retains
strong FedAvg performance despite measured subject differences.}
\label{fig:natural-iid}
\end{figure}

\begin{figure}[t]
\centering
\includegraphics[width=0.98\textwidth]{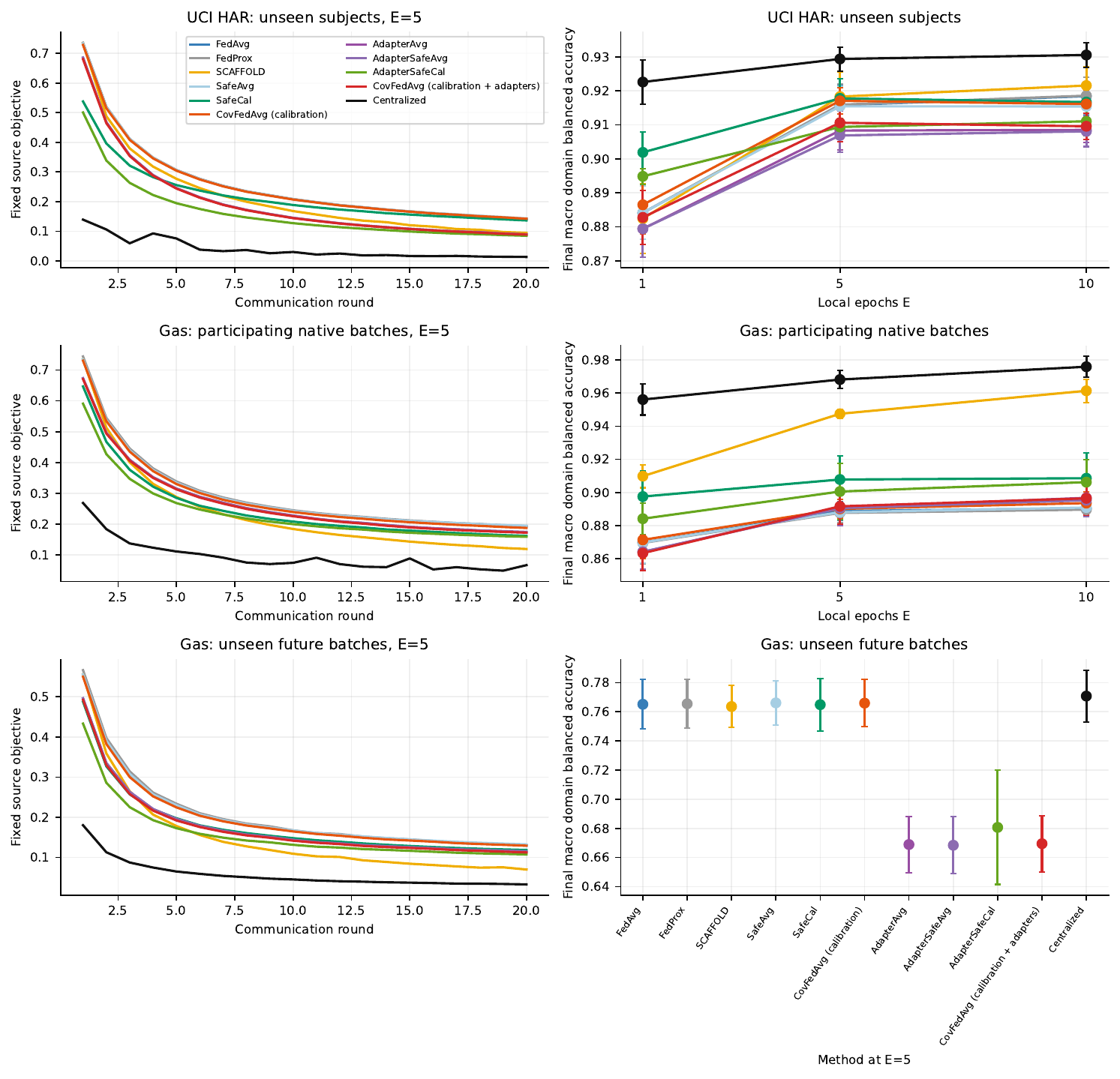}
\caption{Recorded native-domain experiments. \covfedcal is safeguarded
CovFedAvg (\texttt{SafeCurve}); \covfedadapt additionally uses client affine
adapters (\texttt{AdapterSafeCurve}). Left: regularized source-training
objective across rounds at $E=5$. Right: final held-out macro domain balanced accuracy
against local epoch count, with sample standard deviations across five seeds.
The future-Gas task has only $E=5$. The safeguard certifies the source
objective; held-out accuracy is a separate outcome.}
\label{fig:natural-convergence}
\end{figure}

The full machine-readable matrix, per-domain confusion matrices and recalls,
paired differences, adapter parameters, exact source-objective certificates,
and computation ledger are retained in \texttt{results/natural/}. The
reproduction command and source provenance are documented in the accompanying
natural-domain README. These experiments test useful behavior under recorded
heterogeneity; they do not validate an invertible relationship between natural
domains or guarantee that curves improve generalization.

\FloatBarrier
\section{Interpretation, limits, and next tests}\label{sec:discussion}
CovFedAvg changes both the geometric operation and the information model:
client-local curves guide a shared endpoint, and distributed training losses
select an actual model without server calibration examples. The absolute-value
diagnostic illustrates the intended complementary-support mechanism. The
larger experiments show modest improvements over FedAvg, particularly on
participating Gas batches, but the matched direct distributed optimizer
remains stronger there. A training-loss selector also permits worse held-out
or worst-domain performance. Curve fitting is therefore a tested proposal,
not an established source of superior generalization.

The paragraphs below discuss the retained public-calibration reference study.
The native-domain evaluation changes the empirical conclusion from a
controlled feasibility claim to a measured assessment of heterogeneity.
Gas batch membership substantially changes FedAvg's performance under an
otherwise fixed held-out task, whereas HAR permits strong FedAvg results
even on unseen subjects. Non-IID sampling is a property of the data,
not a rule that FedAvg must fail. Both results are retained. The
supplementary imposed-shift pilot remains useful for checking the code,
but does not substitute for these natural domains.

The safeguard establishes sufficient decrease of its actual finite-source
objective and has a stationarity proof. Its full-client gradient fallback
supplies that proof even if curve proposals are unhelpful. The natural
results show why a convergence theorem cannot establish the empirical
contribution of curve fitting: calibration-only safeguards can perform
better, and SCAFFOLD closes much more of the participating-Gas optimization
gap. We do not attribute the generic fallback's guarantee to a discovered
covariant connection. Parameter symmetries also make weight geometry
representation dependent. The centroid of a low-loss path can still have
high loss, which is why the bound retains a dispersion term and the
implementation evaluates the actual candidate.

The calibration-only and no-curve controls address whether curves offer a
benefit beyond extra data and safeguard queries. Their mixed or negative
results do not support a general curve-specific advantage. FedBN and FRAug
remain relevant additional comparators, with different architectures and
data-access requirements. We have not reproduced those methods here, and
no comparative superiority over them is claimed. Calibration scarcity and
domain coverage, hyperparameter selection using source-only validation,
larger models, and incomplete client participation remain important tests.
The new study retains native client-size and class imbalance; five seeds
still do not constitute replication across independent sensor arrays or
populations of acquisition sites.

\begin{figure}[t]
\centering
\includegraphics[width=.8\linewidth]{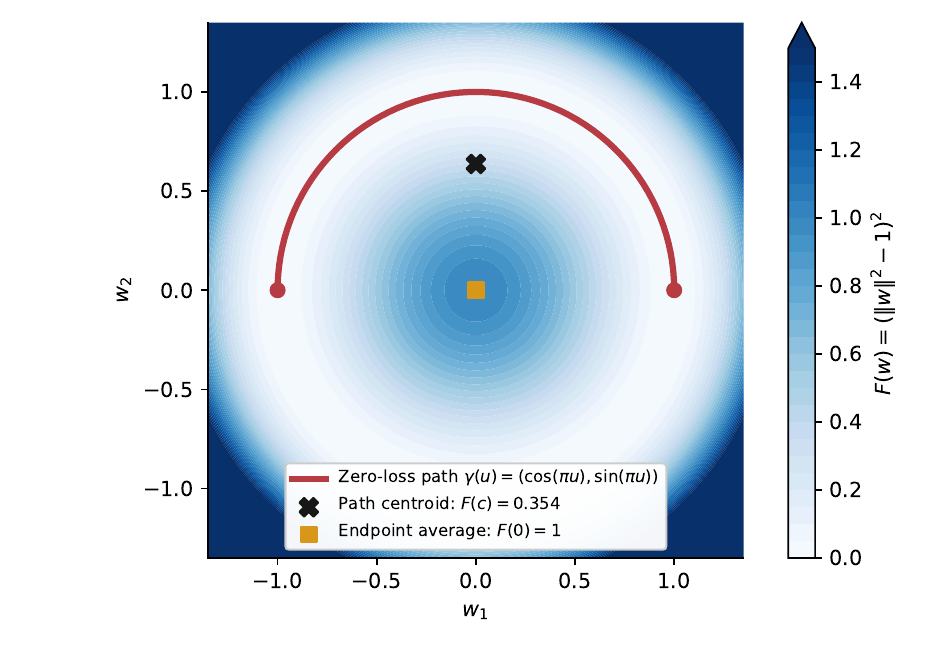}
\caption{An explicit limit of curve aggregation. Every point of the upper
unit semicircle has zero loss for $F(w)=(\|w\|^2-1)^2$, but the path centroid
$(0,2/\pi)$ has loss $(1-4/\pi^2)^2\approx0.354$. An improvement over the
endpoint average does not make the centroid a zero-loss solution.}
\label{fig:counterexample}
\end{figure}

Fixed per-client moment adapters expose a second limitation. They can
remove class-dependent information when native class proportions differ,
and no target-domain adapter can be inferred without target information.
Their deterioration on future Gas illustrates this risk. The exact affine
identity applies to a specified latent measurement model; matching
unconditional moments does not establish that model. A more targeted next
study would estimate class-conditional alignment using source data and
measure its residual gradient disagreement, with a separate held-out
population for final evaluation. Adding a known drift-correction optimizer
to the local updates is also a useful ablation, provided its contribution
is distinguished from that of curves.

The supplementary image and tabular examples model intensity calibration
and measurement scales. The completed HAR~\citep{reyesortiz2013har} study
uses actual subject differences, and Gas~\citep{vergara2012gas} uses actual
acquisition periods. Hospital domains in WILDS~\citep{koh2021wilds} and
cross-style object recognition in DomainNet~\citep{peng2019domainnet} would
extend evaluation to natural shifts involving different institutions or
visual domains, without assuming invertible inter-domain transformations.
A text-domain study needs its own discrete or latent transformation model.
The parameter operations permit other modalities with compatible
architectures, but NLP effectiveness has not been tested.

For both constructions, keeping local training examples on clients is a data-flow property, not a
formal privacy guarantee. Client weights and additional training-risk,
gradient, and preprocessing summaries can leak information. Public
calibration examples also require an independently justified source.
A private deployment would require a threat model and mechanisms compatible
with the required queries. We make no privacy or clinical-utility claim.

\section{Conclusion}
Local feature coverage can leave multiple parameter configurations compatible
with one client's observations. CovFedAvg explores that freedom through
local paths to a shared candidate and selects a model using distributed
losses, eliminating the centralized calibration requirement. Exact candidate
selection gives an empirical comparison, while preservation of every client
remains conditional and stationarity requires additional descent conditions.
The motivating regression is solved; naturally separated domains show
modest gains over FedAvg but no reliable advantage over matched direct
optimization. Correct transformation identities, centroid counterexamples,
and a separate safeguarded stationarity theorem delimit the claims. Further
work must establish when the path constraints improve optimization or
generalization and provide an end-to-end protected communication protocol.

\FloatBarrier

\appendix
\section{Controlled-shift feasibility pilot}\label{sec:experiments}
This initial pilot is retained as supplementary evidence. Its client shifts
are imposed transformations of public records; the natural-domain evaluation
in the main text uses actual subject and acquisition-period identities.
We test whether the corrected construction can be executed and whether any
benefit exceeds using its public calibration data directly. This is a small
federated simulation, with synthetic measurement domains imposed on two real
benchmark datasets and on one synthetic process. It is not evidence of
invariance, transformation identification, a uniformly low-loss path, or
improvement in every client's risk.

\subsection{Tasks, domains, and held-out data}
We use the 1,797-record, ten-class, $8\times8$ digits dataset bundled with
scikit-learn\cite{alpaydin1998optdigits,sklearn2026digits}, and the 569-record,
30-feature, binary Wisconsin Diagnostic Breast Cancer
dataset\cite{wolberg1993wdbc,sklearn2026wdbc}. The simulated sites do not
correspond to actual hospitals or independently collected handwriting domains.
A third example generates 2,400 records from three standard-normal latent
variables linearly mixed into 12 measurement channels, with Gaussian channel
noise of standard deviation $0.12$. The first four channels additionally
contain $0.3z_0z_1$. A three-state label is the argmax of
$(z_0+0.5z_1,-z_0+0.5z_2,-z_1-0.5z_2)$ after adding independent Gaussian
score noise of standard deviation $0.2$. Generation uses seed 2026; subsequent
splits and model initializations vary across seeds.

Each experiment has four clients. Digits are centered as $x/16-0.5$.
The IID control uses this representation unchanged. In the shifted-digit
experiment, client gains are $(0.5,0.8,1.2,1.7)$ and offsets are
$(-0.6,-0.2,0.3,0.6)$, applied uniformly to pixels without clipping.
For cancer and sensors, canonical features are standardized using only the
public calibration subset before simulating client transformations. With
zero-based feature index $j$ and client index $k$, these transformations have
gain $\exp[0.60\cos(0.7j+k\pi/2)]$ and offset
$0.65\sin(0.5j+k\pi/2)$. They are invertible and label preserving. The common
public scaling convention and representative labeled public samples are
additional assumptions; the server does not estimate or receive the inverse
client transformations.

For each seed, we first hold out a stratified 25\% test set, then reserve
disjoint stratified public calibration and public validation subsets. Counts
are respectively 1,187 private training, 80 calibration, 80 validation, and
450 test records for digits; 298, 64, 64, and 143 for cancer; and 1,608, 96,
96, and 600 for sensors. Each split is separately partitioned approximately
equally by class among the four domains. Public calibration and validation
have 20, 16, or 24 examples per domain for digits, cancer, or sensors.
Each record occurs in one split and one client domain, with the same site
transformation applied to that client's training and test inputs. Aggregation
weights use private training counts. The server accesses only transformed
labeled public examples and uploaded model parameters. Test metrics never
choose hyperparameters, updates, checkpoints, or the reported round.

\subsection{Matched training and calibration budgets}
All methods use a dense MLP with 32 tanh hidden units and softmax output,
float64 arithmetic, full participation, and the same per-seed model
initialization. Each client performs two local epochs of SGD with batch size
32 and learning rate $0.06$, starting from its method's broadcast model.
Minibatch permutations are determined by round, client, and seed and are
shared across methods. There are 20 rounds and three seeds: 11, 29, and 47.
There is no momentum or weight decay. We fix hyperparameters before the
reported run; this pilot does not optimize each baseline's hyperparameters.

We compare FedAvg; FedProx with proximal coefficient $0.01$; FedAvg followed
by direct public calibration (FedAvg+cal); and the corrected CovFedAvg.
For CovFedAvg, controls start at $(a+w_k)/2$ and minimize
\begin{equation}
 J(\Theta)=\sum_k\alpha_k\left[
 \frac{1}{3}\sum_{u\in\{1/4,1/2,3/4\}}
 \widehat F_{\rm cal}(\gamma_k(u))
 +\frac{0.001}{2}\left\|\theta_k-\frac{a+w_k}{2}\right\|^2\right].
\end{equation}
We use three simultaneous gradient steps of size $0.40$, retaining
$\alpha_k$ in each control gradient and fixing both endpoints. Thus each
round uses $4\times3\times3=36$ full-calibration forward/backward evaluations.
FedAvg+cal instead takes 36 full-calibration gradient steps at learning rate
$0.06$, starting at the same type of FedAvg anchor. This matches server
gradient calls and processed calibration examples; effective update sizes and
objectives still differ. Both calibrated methods accept the final candidate
only if its finite pooled public-validation cross entropy is no greater than
the current anchor's. The distinct validation subset is reused across rounds,
so its empirical guard is not an untouched population-risk certificate.

The four methods transmit the same parameter-vector payloads. Parameter
counts are 2,410, 1,058, and 515 for digits, cancer, and sensors; aggregate
upload and download payloads are each 77,120, 33,856, and 16,480 bytes per
round in float64, excluding protocol overhead. Over 20 rounds, the calibrated
methods each add 720 server-gradient evaluations, processing 57,600, 46,080,
and 69,120 public examples, respectively. Private local-gradient example
counts are 47,480, 11,920, and 64,320. Public calibration is therefore
substantial computation, especially for the small cancer task. Curve
controls remain on the server. Network latency, privacy mechanisms, and
distributed systems costs are not simulated.

\subsection{Results and their limits}
Table~\ref{tab:pilot-results} reports final-round means and sample standard
deviations across seeds. Macro accuracy averages client accuracies equally;
worst accuracy is the minimum client accuracy within each run; test cross
entropy pools held-out observations. The deviations are not confidence
intervals or significance tests. Figure~\ref{fig:pilot-convergence} shows
convergence using the same test sets only for reporting.

\begin{table}[t]
\centering
\caption{Feasibility pilot at round 20: mean $\pm$ sample standard deviation
over three seeds. Domains in the shifted settings are constructed synthetic
measurement domains.}\label{tab:pilot-results}
\small
\begin{tabular}{llrrr}
\toprule
Setting & Method & Macro acc. (\%) & Worst acc. (\%) & Test CE \\
\midrule
Digits, IID & FedAvg & $94.1\pm1.5$ & $91.1\pm4.0$ & $0.330\pm0.032$ \\
Digits, IID & FedProx & $94.1\pm1.5$ & $91.1\pm4.0$ & $0.331\pm0.032$ \\
Digits, IID & FedAvg+cal & $94.2\pm0.7$ & $91.7\pm1.4$ & $0.228\pm0.018$ \\
Digits, IID & CovFedAvg & $94.6\pm0.7$ & $91.7\pm2.2$ & $0.311\pm0.029$ \\
Digits, shifted & FedAvg & $93.2\pm0.9$ & $90.0\pm1.8$ & $0.422\pm0.039$ \\
Digits, shifted & FedProx & $93.2\pm0.9$ & $90.0\pm1.8$ & $0.424\pm0.039$ \\
Digits, shifted & FedAvg+cal & $94.1\pm0.3$ & $91.7\pm1.0$ & $0.256\pm0.028$ \\
Digits, shifted & CovFedAvg & $93.9\pm1.0$ & $91.7\pm2.7$ & $0.395\pm0.035$ \\
Breast cancer, shifted & FedAvg & $95.3\pm2.8$ & $92.6\pm3.2$ & $0.136\pm0.041$ \\
Breast cancer, shifted & FedProx & $95.3\pm2.8$ & $92.6\pm3.2$ & $0.136\pm0.041$ \\
Breast cancer, shifted & FedAvg+cal & $96.0\pm1.8$ & $91.7\pm2.8$ & $0.141\pm0.057$ \\
Breast cancer, shifted & CovFedAvg & $95.8\pm2.1$ & $91.7\pm2.8$ & $0.130\pm0.036$ \\
Synthetic sensors, shifted & FedAvg & $91.2\pm1.1$ & $88.2\pm1.0$ & $0.215\pm0.009$ \\
Synthetic sensors, shifted & FedProx & $91.2\pm1.1$ & $88.2\pm1.0$ & $0.215\pm0.009$ \\
Synthetic sensors, shifted & FedAvg+cal & $91.4\pm1.1$ & $88.9\pm0.8$ & $0.213\pm0.010$ \\
Synthetic sensors, shifted & CovFedAvg & $91.4\pm1.0$ & $88.4\pm1.0$ & $0.214\pm0.009$ \\
\bottomrule
\end{tabular}

\end{table}

\begin{figure}[p]
\centering
\includegraphics[width=0.95\textwidth]{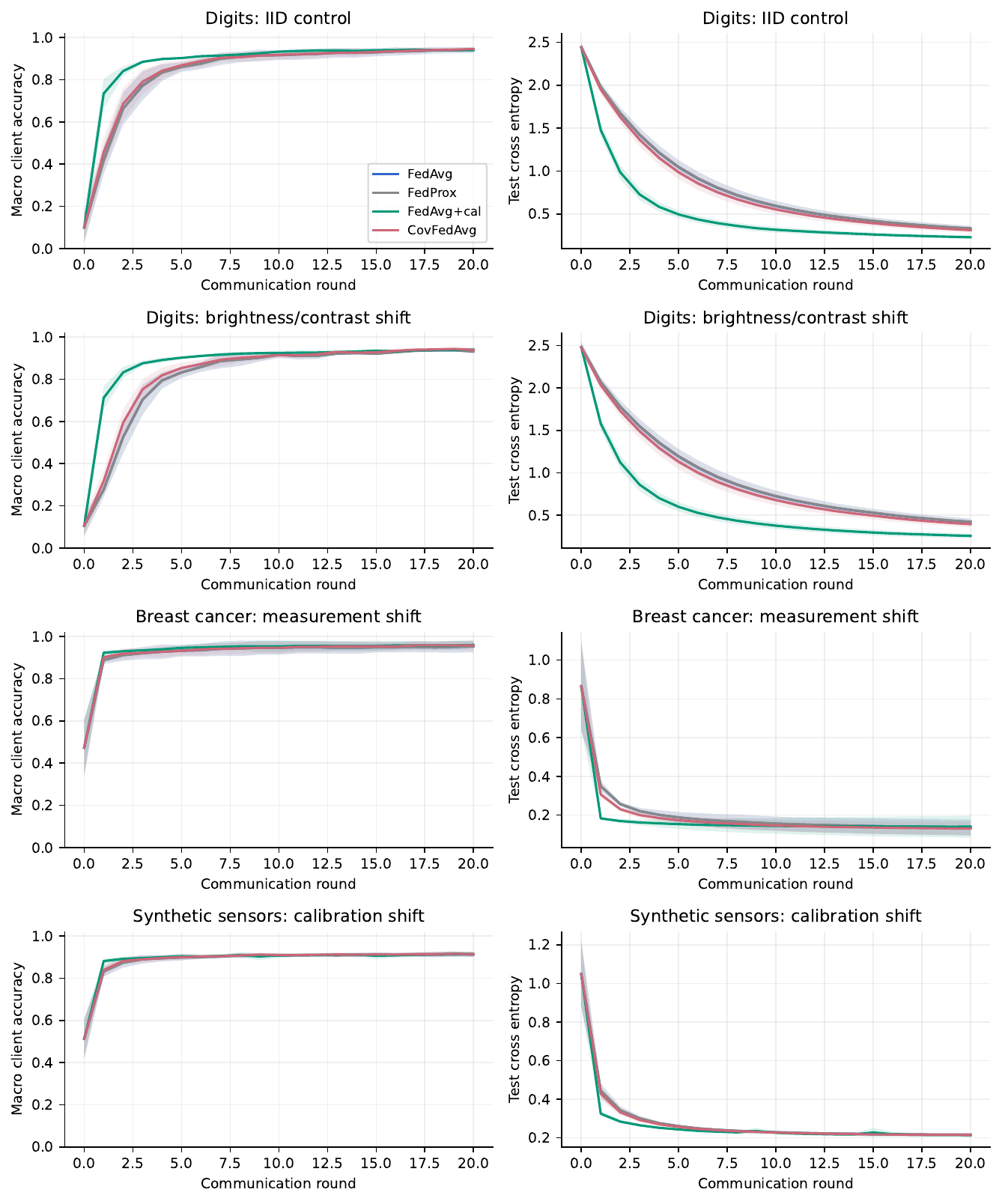}
\caption{Held-out macro client accuracy and cross entropy. Lines are
three-seed means; bands show one sample standard deviation. Public
calibration updates have the same gradient-evaluation budget for FedAvg+cal
and CovFedAvg.}\label{fig:pilot-convergence}
\end{figure}

CovFedAvg's final macro accuracy is slightly higher than FedAvg's in all four
settings under this fixed configuration. However, direct calibration equals
or exceeds its macro accuracy in all three shifted settings, and achieves
lower digit cross entropy: $0.256$ versus $0.395$ under shift and $0.228$
versus $0.311$ in the IID control. Consequently, these results do not
establish a reliable advantage attributable to curve aggregation rather than
public calibration. FedProx at the tested coefficient produces the same final
predicted classes as FedAvg, although its losses differ. For cancer,
CovFedAvg improves macro accuracy from $95.34\%$ to $95.81\%$ while mean
worst-client accuracy decreases from $92.59\%$ to $91.67\%$. An empirical
pooled validation guard does not guarantee per-client or held-out
improvement.

The executable implementation, fixed configurations, record indices,
per-round metrics, per-client final metrics, paired seed differences, costs,
and an unguarded-candidate diagnostic accompany the manuscript. The latter
evaluates each candidate on the \emph{guarded trajectory}; it is not an
independent experiment that removes the guard from all previous rounds.
Eight tests pass, covering endpoint continuity, the integral centroid,
unequal-weight FedAvg recovery, finite-difference network and control
gradients, guard behavior including nonfinite rejection, disjoint splits and
invertible transforms, and deterministic repeated rounds. Nine separate
numerical witnesses complement the written mathematical audit, giving 17
passing checks in total; they do not replace proofs or verify the theoretical
assumptions on these datasets.

The bundled experiment README records exact commands and dependencies.
The full single-thread NumPy benchmark took approximately $3.48$ seconds
including plotting and outputs in the recorded environment. These small
tasks omit real multi-institution shifts, larger models, stronger
personalization baselines, calibration scarcity and mismatch, independently
unguarded training, partial participation, and privacy analysis. Genuine
FedBN, which retains client-local batch-normalization parameters and
statistics, is not implemented; ordinary input scaling is not presented as
that method. Three seeds, one configuration, synthetic label-preserving
shifts, and the assumed public reference data limit generalization. These
examples establish executability and expose failures; they do not establish
universal superiority or clinical utility.

\FloatBarrier

\bibliographystyle{plainnat}
\bibliography{references}
\end{document}